\def\BibTeX{{\rm B\kern-.05em{\sc i\kern-.025em b}\kern-.08em
    T\kern-.1667em\lower.7ex\hbox{E}\kern-.125emX}}
\let\Ginclude@graphics\@org@Ginclude@graphics 
\definecolor{mydarkblue}{rgb}{0,0.08,0.45}
\newtheorem{theorem}{Theorem}[section]
\newtheorem{lemma}[theorem]{Lemma}
\newtheorem{definition}[theorem]{Definition}
\newcommand{\wh}{\widehat}
\newcommand{\wt}{\widetilde}
\newcommand{\R}{\mathbb{R}}
\renewcommand{\tilde}{\wt}
\renewcommand{\hat}{\wh}
\newcommand{\Tmat}{{\cal T}_{\mathrm{mat}}}
\DeclareMathOperator*{\E}{{\mathbb{E}}}
\DeclareMathOperator{\poly}{poly}
\DeclareMathOperator{\tr}{tr}
\newcommand*{\RN}[1]{\expandafter\@slowromancap\romannumeral #1@}
\newcommand{\linebreakand}{%
  \end{@IEEEauthorhalign}
  \hfill\mbox{}\par
  \mbox{}\hfill\begin{@IEEEauthorhalign}
}
\begin{document}

\date{}
\title{Fast Heavy Inner Product Identification Between Weights and Inputs in Neural Network Training}

\ifdefined\isarxiv

% \title{Fast Heavy Inner Product Identification Between Weights and Inputs in Neural Network Training}

\author{
Lianke Qin\thanks{\texttt{lianke@ucsb.edu}. UCSB.}
\and 
Saayan Mitra\thanks{\texttt{smitra@adobe.com}. Adobe Research.}
\and 
Zhao Song\thanks{\texttt{zsong@adobe.com}. Adobe Research.}
\and
Yuanyuan Yang\thanks{\texttt{yyangh@cs.washington.edu}. University of Washington.}
\and 
Tianyi Zhou\thanks{\texttt{tzhou29@usc.edu}. USC.}
%\and 
%Danyang Zhuo\thanks{\texttt{danyang@cs.duke.edu}. Duke University.}
}

\else

\author{
\IEEEauthorblockN{Lianke Qin}
\IEEEauthorblockA{\textit{Department of Computer Science} \\
\textit{University of California, Santa Barbara}\\
Santa Barbara, CA \\
lianke@ucsb.edu}
\and
\IEEEauthorblockN{Saayan Mitra}
\IEEEauthorblockA{\textit{Adobe Research} \\
\textit{Adobe}\\
San Jose, CA \\
smitra@adobe.com}
\and
\IEEEauthorblockN{Zhao Song}
\IEEEauthorblockA{
\textit{Adobe Research}\\
\textit{Adobe}\\
San Jose, CA \\
zsong@adobe.com}
\linebreakand 
\IEEEauthorblockN{Yuanyuan Yang}
\IEEEauthorblockA{
\textit{Department of Computer Science}\\
\textit{University of Washington}\\
Seattle, WA \\
yyangh@cs.washington.edu}
\and 
\IEEEauthorblockN{Tianyi Zhou}
\IEEEauthorblockA{
\textit{Department of Computer Science}\\
\textit{University of Southern California}\\
Los Angeles, CA \\
tzhou029@usc.edu}
}

\fi

\ifdefined\isarxiv
\begin{titlepage}
  \maketitle
  \begin{abstract}
In this paper, we consider a heavy inner product identification problem, which generalizes the Light Bulb problem~(\cite{prr89}): Given two sets $A \subset \{-1,+1\}^d$ and $B \subset \{-1,+1\}^d$ with $|A|=|B| = n$, if there are exact $k$ pairs whose inner product passes a certain threshold, i.e., $\{(a_1, b_1), \cdots, (a_k, b_k)\} \subset A \times B$ such that $\forall i \in [k], \langle a_i,b_i \rangle \geq \rho \cdot d$, for a threshold $\rho \in (0,1)$, the goal is to identify those $k$ heavy inner products. We provide an algorithm that runs in 
$O(n^{2 \omega / 3+ o(1)})$ time to find the $k$ inner product pairs that surpass $\rho \cdot d$ threshold with high probability, where $\omega$ is the current matrix multiplication exponent. By solving this problem, our method speed up the training of neural networks with ReLU activation function.

  \end{abstract}
  \thispagestyle{empty}
\end{titlepage}

{\hypersetup{linkcolor=black}
% \tableofcontents
}
\newpage

\else
\maketitle

\begin{abstract}

\end{abstract}

\fi

\section{Introduction}

%Deep neural networks have achieved revolutionary success in many fields including computer vision \cite{ rhgs15, hzrs16}, natural language processing \cite{ cvmg+14, vsp+17}, robotics \cite{sbs+18}, recommendation system \cite{wwy15, ckh+16} and information retrieval \cite{ mc18, zxll20}. These fields have been increasingly improving  our experience in daily applications like online shopping, personal assistant, self-driving vehicles and so on.

%How fast we can train a neural network is a fundamental problem in machine learning. In each step of training, we are given two set of vectors, one is corresponding to input images, and the other is corresponding to weights of neural network. The goal of each step in the training process is to identify the heavy inner product (between input and weight) which surpasses the activation function threshold and activates corresponding neurons. The neurons are usually sparsely activated, so efficiently identifying the heavy inner product pair is needed for faster neural network training.

Efficiently finding heavy inner product has been shown useful in many fundamental optimization tasks such as sparsification \cite{sxz22}, linear program with small tree width \cite{y20,gs22}, frank-wolf~\cite{xss21}, reinforcement learning~\cite{ssx21}. In this paper, we define and study the heavy inner product identification problem, and provide both a randomized and a deterministic algorithmic solution for this problem. 
%we present its application in training sparse deep neural networks. 

Mathematically, we define our main problem as follows:
\begin{definition}[Heavy inner product between two sets]\label{def:main_problem}
Given two sets $A \subset \{-1,+1\}^d$ and $B \subset \{-1,+1\}^d$ with $|A|=|B|=n$, 
which are all independently and uniformly random except for $k$ vector pairs $\{(a_1, b_1),\cdots, (a_k, b_k) \} \subset A \times B$ such that $\forall i \in [k], \langle a_i,b_i\rangle \geq \rho \cdot d$ for some $0 < \rho \leq 1$. The goal is to find these $k$ correlated pairs.
\end{definition}

We give an example of this problem in Figure \ref{fig:distribution}. A naive solution to this problem would be, to do a linear scan on every $(a,b)$ pair, which runs in $O(n^2 d)$ time. In practice, usually $k$ is sufficiently small, so that a $o(n^2 d)$ algorithm might be possible. This motivates the following question:

%We remark there are several work \cite{d20} in deep learning theory about choosing both points $A$ ($|A| = n$) and weights $B$ ($|B| = m$) to be random. This motivates how we define our problem that way, and raises an important question:
\begin{center}
    {\it Can we design an efficient algorithm to identify the heavy inner product pair $(a,b) \in A \times B$, i.e. runs in $o(n^2 d)$ time.}
\end{center}

We provide a positive answer (Theorem \ref{thm:main_prelim}) for the above question by an algorithmic solution \ifdefined\isshort (Algorithm 2 in full version \cite{full}) \else (Algorithm \ref{alg:light_bulb2}) \fi
. 
%Given $w$ is the matrix multiplication exponent and $\kappa$ is a constant between 0 and 1, %\Tianyi{I haven't found the definition of $\kappa$ in \cite{a18}}, we first set a high probability threshold $v = \delta(w/\kappa)\log n$  %\Yuanyuan{Tianyi, please add descriptions for $\kappa$ and $w$ here. }polynomial for the inner product such that all $k$ non-correlated vector pairs $\{(a_1, b_1),\cdots, (a_k, b_k) \} \subset A \times B$ satisfies $|\langle a_i, b_i \rangle| < v$ for $\forall i \in [k]$ with high probability.
In the algorithm, we use two pairwise independent hash functions to partition $A$ and $B$ into $h = n^{2/3}$ groups of size $n^{1/3}$ as $A_1, \cdots, A_h$ and $B_1, \cdots, B_h$, respectively. After that, we compute a score $C_{i, j}$ for each group pair $A_i$ and $B_j$ to help identify whether the group pair contains the correlated vector pair with constant success probability. After repeating the process for $O(\log n)$ times, we can locate the target group pairs with polynomially low error 

and brute force search within group pairs to find the exact heavy inner product vector pairs. We further accelerate the computation of $C_{i,j}$ by carefully designing matrix multiplication.

\subsection{Our results}
We state our main results in the following theorem:
\begin{theorem}[Our results, informal version of Theorem~\ref{thm:main}]\label{thm:main_prelim}  
For the heavy inner product identification problem defined in Def.~\ref{def:main_problem},  
there is a constant $c_0>0$ such that for correlation $\rho$, we can find the heavy $k$ pairs $\{(a_1, b_1),\cdots, (a_k, b_k) \}$ in randomized time $O(n^{2 \omega / 3+ o(1)})$ whenever $d \leq c_0 \log n$ with
probability at least $1 -  1/n^{10}- k^2/n^{2/3}$. 
\end{theorem}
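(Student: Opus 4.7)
The plan is to verify that the three-phase algorithm sketched in the introduction succeeds in time $O(n^{2\omega/3+o(1)})$ with the stated probability, by separately handling (i) the isolation of heavy pairs under the pairwise-independent hashing, (ii) the signal-versus-noise analysis of the score $C_{i,j}$, and (iii) the fast matrix multiplication step that produces all $C_{i,j}$'s at once.

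\textbf{Isolation via hashing.} First I would show that with probability at least $1-k^2/n^{2/3}$ the two pairwise-independent hashes $h_A, h_B:[n]\to[h]$ with $h=n^{2/3}$ place every heavy $a_\ell$ in a distinct $A$-group and every heavy $b_\ell$ in a distinct $B$-group, so that each bin $(A_i,B_j)$ contains at most one heavy pair. For a fixed pair $\ell\ne\ell'$, pairwise independence gives collision probability $1/h=n^{-2/3}$ for either hash, and a union bound over $2\binom{k}{2}$ ordered pairs of indices yields the claimed failure term $k^2/n^{2/3}$. This is the only place the $k^2/n^{2/3}$ term enters the final probability bound.

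\textbf{Separating heavy bins from random bins.} Conditioned on the isolation event, I would argue that within one round the designed score $C_{i,j}$ separates bins containing a heavy pair from purely random bins with constant probability. The signal from a heavy pair contributes at least $\rho d$ to $C_{i,j}$, while the aggregate contribution of the $n^{2/3}$ random pairs in a bin is controlled by a coordinate-wise Hoeffding bound together with the hypothesis $d\le c_0\log n$ for a sufficiently small constant $c_0=c_0(\rho)$, which keeps the total noise strictly below $\rho d$ with constant probability. Standard Chernoff amplification over $O(\log n)$ independent rounds, each using fresh pairwise-independent hashes, then drives the per-bin identification error below $1/n^{10+4/3}$, and a union bound over the $n^{4/3}$ bins gives the stated $1/n^{10}$ term.

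\textbf{Runtime via fast (rectangular) matrix multiplication.} The key computational observation is that the full $h\times h$ score matrix $C$ is the output of a single product of two matrices obtained from the grouped aggregates of $A$ and $B$, whose inner dimension is bounded by a small polynomial in $d\le c_0\log n$, i.e.\ $n^{o(1)}$. This product is the bottleneck and can be performed in $O(n^{2\omega/3+o(1)})$ time by fast rectangular matrix multiplication on the $n^{2/3}\times n^{2/3}$ output; the $O(\log n)$ rounds and the brute-force verification on the $O(k)$ surviving candidate bins, which costs $n^{2/3}d$ per bin, are both absorbed into $n^{o(1)}$.

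\textbf{Main obstacle.} The delicate step is the middle one: the score $C_{i,j}$ must simultaneously (a) amplify a single $\ge \rho d$ signal enough to stand above the aggregated noise of $n^{2/3}$ independent random inner products, and (b) be representable as a single matrix product with inner dimension $n^{o(1)}$ so that the entire $n^{4/3}$-entry score matrix fits into the $n^{2\omega/3+o(1)}$ budget. Tuning $c_0$ against $\rho$ to enforce sub-Gaussian concentration of the bin noise at a scale below $\rho d$, and then pushing the per-bin error below $n^{-4/3}$ via $O(\log n)$ independent repetitions so that the union bound over $n^{4/3}$ bins closes, is what makes the constant-success step into a high-probability guarantee.
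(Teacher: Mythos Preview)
Your three-phase outline (hashing/isolation, score-based bin identification, fast matrix product) matches the paper, and your treatment of the isolation step and the resulting $k^2/n^{2/3}$ term is correct. However, the middle step as you describe it does not go through, and the error propagates to your runtime claim.

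\textbf{The signal-versus-noise gap.} You write that the heavy pair contributes $\rho d$ to $C_{i,j}$ while the $n^{2/3}$ random pairs are controlled by Hoeffding and the smallness of $c_0$. But if $C_{i,j}$ were any linear (or bounded-degree) functional of the pairwise inner products, the noise from $g^2=n^{2/3}$ random pairs has standard deviation of order $n^{1/3}$ times a typical individual term, whereas the single heavy pair contributes only one such term. No choice of $c_0=c_0(\rho)$ can make one term of size $\rho d$ dominate $n^{1/3}$ terms of size $\Theta(\sqrt{d\log n})$ when $d=\Theta(\log n)$: the noise wins by a polynomial factor in $n$. The paper resolves this by taking the $r$-th power $p(z_1,\dots,z_d)=(z_1+\cdots+z_d)^r$ with $r=\lceil\log_w(\tau n^{1/3})\rceil=\Theta(\log n)$, so that a heavy inner product $\ge wv$ is amplified to $\ge (wv)^r\ge \tau n^{1/3}v^r$ while each random inner product stays below $v^r$. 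This extra $n^{1/3}$ factor is exactly what is needed for Chebyshev on the sign-randomized sum to give constant-probability separation; without it the argument fails.

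\textbf{The inner dimension is not $n^{o(1)}$.} Because $r=\Theta(\log n)$, the multilinearization of $p$ has $t=\sum_{i\le r}\binom{d}{i}$ monomials, and with $d=\Theta(\log n)$ this is $n^{2/3+o(1)}$, not $n^{o(1)}$ (this is the paper's Lemma~\ref{lem:bound_t_and_u}). So the matrix product is $\mathcal{T}_{\mathrm{mat}}(n^{2/3},n^{2/3+o(1)},n^{2/3})=O(n^{2\omega/3+o(1)})$, which still lands on the stated bound but for a different reason than you give. Moreover, forming the factor matrices $U,V\in\mathbb{Z}^{h\times t}$ naively would cost $\Omega(n^{5/3})$; the paper needs a second matrix-multiplication trick (splitting each monomial as a symmetric difference of two sets of size $\le\lceil r/2\rceil$) to build $U,V$ in $O(n^{(2+\omega)/3+o(1)})$ time. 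Your outline omits both of these points, and the claim that the inner dimension is polylogarithmic would, if true, actually be incompatible with the amplification needed in the previous step.
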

For the current fast matrix multiplication algorithm with $\omega \approx 2.373$~(\cite{aw21}), our running time becomes $O(n^{1.582})$.

\subsection{Technique Overview}

We briefly present our techniques in designing the algorithm: (1)\ Setting a high probability threshold for the uncorrelated inner product pairs. (2)\ Partition $A$ and $B$ into $h = n^{2/3}$ groups respectively and locate the group pair $(i,j)$ with the heavy inner product pair by a score $C_{i,j}$  
(3)\ Accelerate the computation of score function $C_{i,j}$.

\paragraph{Set the high probability threshold.}
By picking a threshold $v:=\delta(w / \kappa) \log n$, for large enough $\delta$, constant $w$ and for fixed $\kappa$, 
we have that for each uncorrelated pair $(x, y) \in A \times B$, $|\langle x, y \rangle| < v$ with probability at least $1 - 1/n^{13}$. By a union bound over all possible $O(n^2)$ pairs, we have $|\langle x, y\rangle| < v$ for all such $x, y$ with probability at least $1-1 / n^{11}$.
Let $\wt{x} \in A, \wt{y} \in B$ denote the correlated pairs which satisfy that $\langle \wt{x}, \wt{y}\rangle \geq \rho d= v$.

\begin{figure}[!ht]
    \centering
    \subfloat[]{\includegraphics[width=0.24\textwidth]{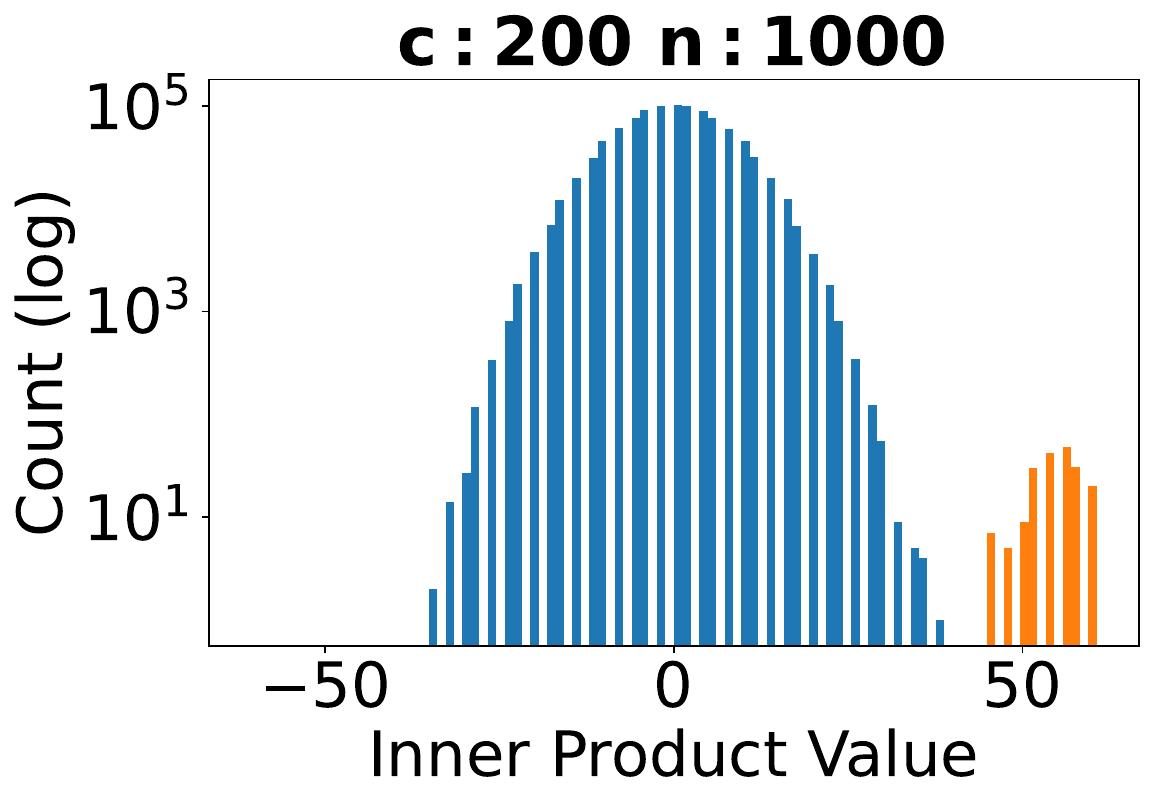}}
    \subfloat[]{\includegraphics[width=0.24\textwidth]{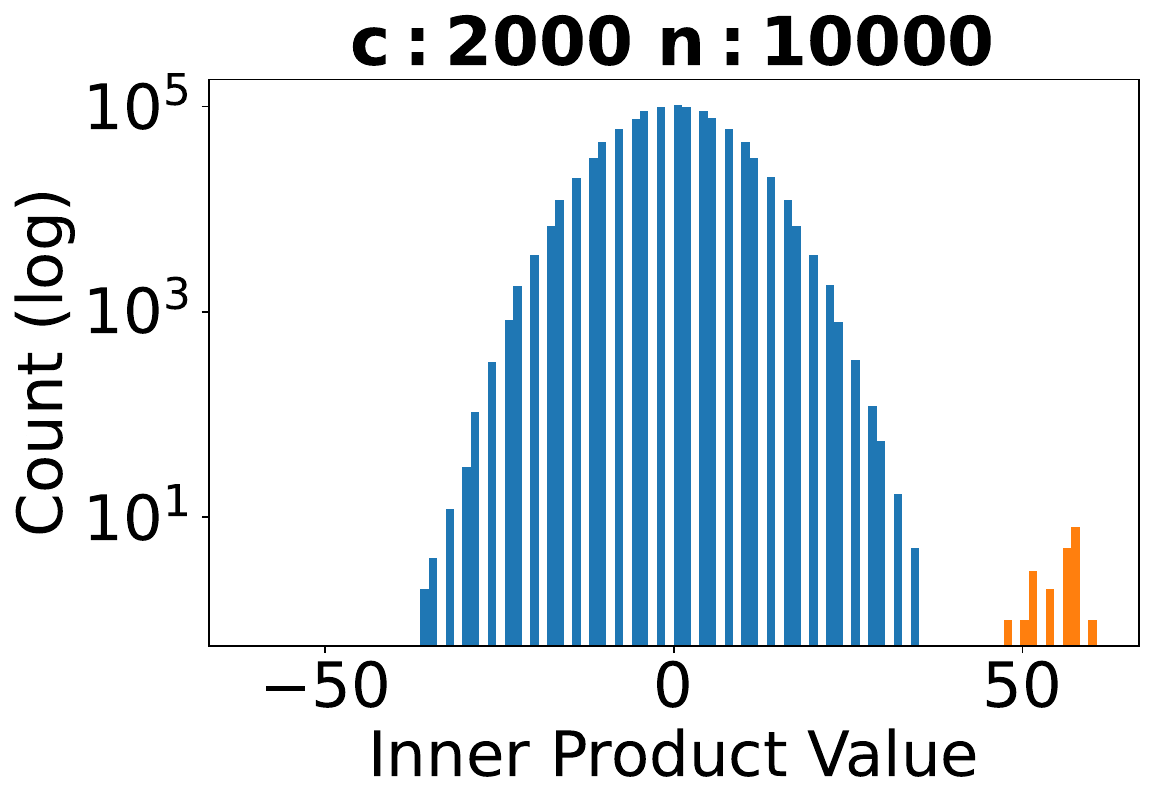}} \\
    \subfloat[]{\includegraphics[width=0.24\textwidth]{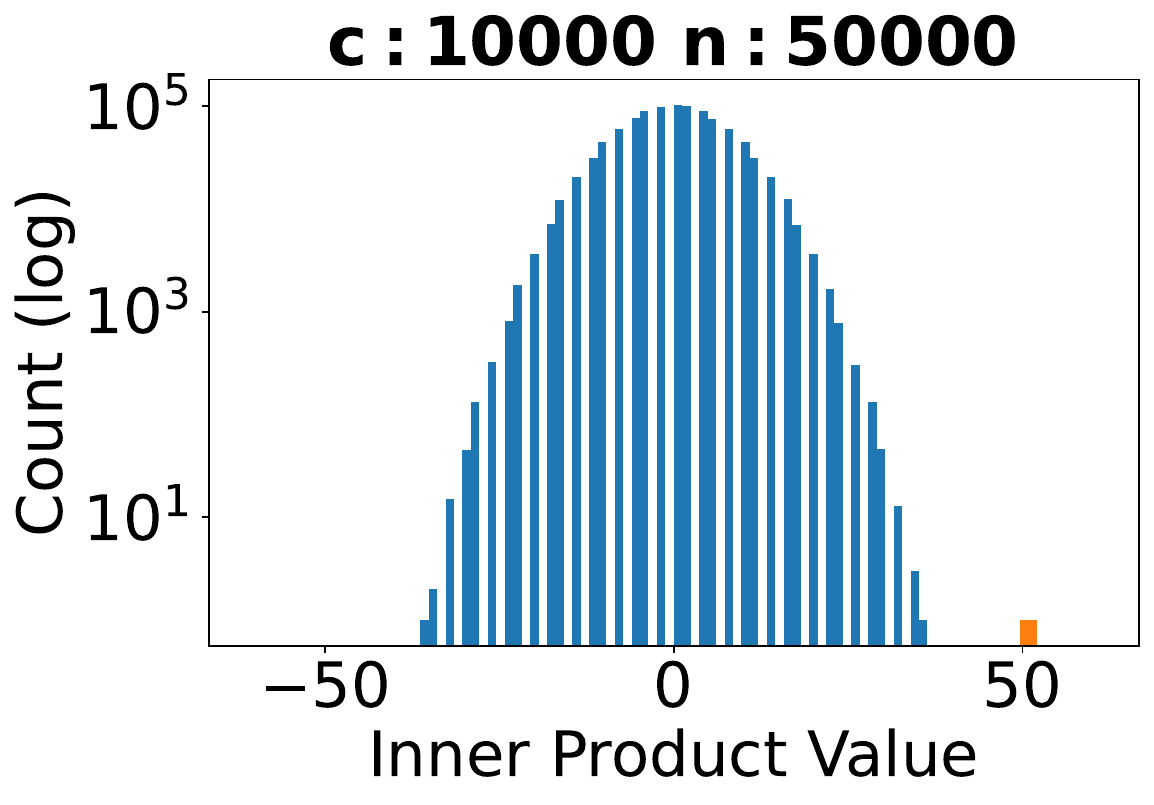}}
    \subfloat[]{\includegraphics[width=0.24\textwidth]{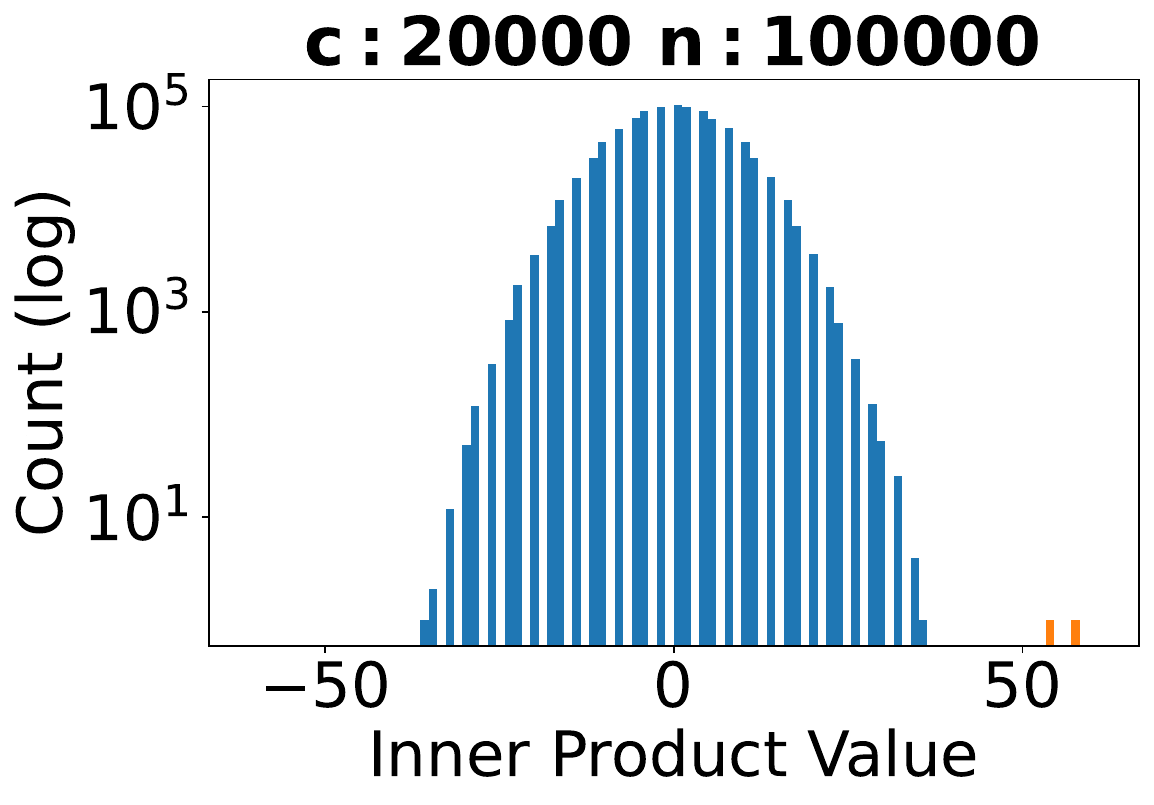}}
    \caption{{\bf Simulation for the heavy inner product identification problem}  
    , where the $y$ values are in logarithmic scale. For each subfigure, we generated two sets $A \subset \{-1,+1\}^{60}$ and $B \subset \{-1,+1\}^{60}$ with $|A|=|B|=n$, where $c$ pairs of them are correlated and others are randomized. The values of $c,n$ are defined in the subfigures. For each entry in the correlated vector pairs, they have $96\%$ chance to be the same and $4\%$ chance to be different. For uncorrelated vectors, we randomize all of them. We randomly pick a pair $(a,b)$ from $A\times B$ and calculate $\langle a, b \rangle$. We repeat this process $1,000,000$ times and get this histogram. We can clearly see that the inner product of uncorrelated vectors (blue) are distributed around $0$, and their absolute value almost don't exceed $50$, which is smaller than the dimension of the vector. Additionally, the inner product of correlated vectors (orange) are distributed around $50$. Our goal is to find these correlated vectors.
}
    \label{fig:distribution}
\end{figure}

\paragraph{Locate the group index containing the correlated vector pairs with $C_{i,j}$.}
We first partition $A$ and $B$ into $h := n^{2/3}$ groups and each group contains $g:=n^{1/3}$ elements. For each $x \in A$, we pick a random value $a^{x} \in \{ -1, 1\}$ independently and uniformly, and for a constant $\tau > 0$ define $r:=\lceil\log_{w}(\tau n^{1 / 3})\rceil$. We define a polynomial $p: \mathbb{R}^{d} \rightarrow \mathbb{R}$ by:
\begin{align*} 
p(z_{1}, \ldots, z_{d})=(z_{1}+\cdots+z_{d})^{r}. 
\end{align*}
For each group pair $(i, j)$ we compute a score $C_{i, j}$ defined as:
\begin{align*}
    C_{i, j}:=\sum_{x \in A_{i}} \sum_{y \in B_{j}} a^{x} \cdot a^{y} \cdot p(x_{1} y_{1}, \ldots, x_{d} y_{d}) .
\end{align*}
If the correlated pair is not in $A_{i} \in \{-1,1\}^{d}$ or $B_{j} \in \{-1,1\}^{d} $, $C_{i, j}$ has expectation $0$.
For sufficiently large constant $\tau$, by the Chebyshev inequality, we have that 
 \begin{align*}
    |C_{i, j}| \leq \tau n^{1 / 3} v^{r} / 3 
 \end{align*}
  with probability at least $3 / 4$. Let $\theta=\tau n^{1 / 3} v^{r} / 3$.
  
If we repeat the process of selecting the $a^{x}$ values for each $x \in A$ independently at random $O(\log n)$ times, whichever top $k$ group pairs $A_{i}, B_{j}$ has $|C_{i, j}| \geq 2 \theta$ most frequently will be the pairs containing the heavy inner product vector pairs with high probability. After identifying the group pairs which contain the heavy inner product vector pairs, it remains a brute force search between $k$ pairs of groups, each containing $O(n^{1/3})$ vectors to output the result.

 \ifdefined\isshort

We defer the proofs into the full version of the paper \cite{full}.
 \else
 
\begin{figure}[!ht]
    \centering
    \includegraphics[width=0.49\textwidth]{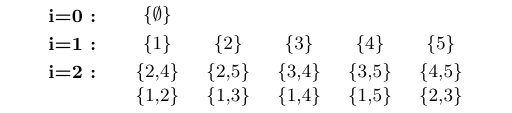}
    \caption{
    {\bf Example of $M_1,\cdots,M_t$, given $d=5$ and $r=2$.}  In this case, $t=16$ as there are $\sum_{i=0}^2\tbinom{5}{i}=16$ distinct subsets $M \subset [d]$ of size less than $2$. }
    \label{fig:subset}
\end{figure}
\fi

\paragraph{Accelerate computing $C_{i, j}$.}
Given $t=\sum_{i=0}^r \tbinom{d}{i}$, let $M_{1}, \ldots, M_{t}$ be an enumeration of all distinct subsets of $[d]$ of size at most $r$
, we can reorganize $C_{i,j}$ as
\begin{align*}
 C_{i, j}
 = & ~ \sum_{s=1}^{t} ( c_{s} \cdot(\sum_{x \in A_{i}} a^{x} \cdot x_{M_{s}}) \cdot(\sum_{y \in B_{j}} a^{y} \cdot y_{M_{s}}))
\end{align*}
for some $c_s, s \in [t]$ that can be computed in $\poly (r) = \poly \log(n)$ time, where  $x_M := \Pi_{i \in M} x_i$ for any $M \subseteq [d]$.

And define the matrices $U, V \in \mathbb{Z}^{h \times t}$ by 
\begin{align*}
    U_{i, s} :=\sum_{x \in A_{i}} a^{x} \cdot x_{M_{s}} \mathrm{~~and~~} V_{i, s} :=c_{s} \cdot U_{i, s}.
\end{align*}
Then we know that the matrix product $C:=U V^{\top} \in \mathbb{Z}^{h \times h}$ 
 is exactly the matrix of the values $C_{i, j}$ we desire and the time complexity of computing the matrix multiplication is 
 \begin{align*}
     \Tmat(n^{2/3}, n^{2 / 3+ o(1)}, n^{2 / 3}) = O(n^{2 \omega / 3+ o(1)}).
 \end{align*}

In addition, we further speed up the calculation of matrix $U$ and $V$ as follows: Let $N_{1}, \ldots, N_{u}$ be an enumeration of all subsets of $[d]$ of size at most $\lceil r / 2\rceil$. For each $i \in[h]$, define the matrices $L^{i}, \tilde{L}^{i} \in \mathbb{Z}^{u \times g}$ by $L_{s, x}^{i} :=x_{N_{s}}$  and $\tilde{L}_{s, x}^{i} :=a^{x} \cdot x_{N_{s}}$.
Then  compute the product $P^{i}:=L^{i} \tilde{L}^{i \top} \in \R^{u \times u}$ and each desired entry $U_{i, s}$ can be found as an entry of the computed matrix $P^{i}$. Computing the entries of the $L^{i}$ matrices naively takes 
\begin{align*}
O(h \cdot u \cdot g \cdot r)=& ~\tilde{O}(n \cdot u) \\ 
= & ~\tilde{O}(n^{4 / 3+ o(1)})
\end{align*}
time, and computing the $P^{i} \in \R^{u \times u}$ takes time 
\begin{align*}
O(h ) \cdot  \Tmat(u, g, u)= O(n^{(2+\omega) / 3+ o(1)} ).
\end{align*}
They are all bounded by $O(n^{2\omega/3 + o(1)})$ time.

\paragraph{Roadmap.} We first discuss several related works in Section~\ref{sec:related_work}. We then introduce several preliminary definitions and lemmas in Section~\ref{sec:prelim}. We present our randomized algorithm design and its main theorem in Section~\ref{sec:randomized_alg}. We further provide a deterministic algorithm for the heavy inner product identification problem in Section~\ref{sec:deterministic}. We conclude our contributions in Section~\ref{sec:conclusion}.
 %%% Section 1. Introduction
\section{Related Work}\label{sec:related_work}

\paragraph{Correlations on the Euclidean Sphere.}
\cite{c02} proposed a randomized hashing algorithm which rounds the Euclidean sphere to
$\{-1,1\}^{d}$. The hash function chooses a uniformly random hyperplane through the origin, and outputs $1$ or $-1$ depending on which side of the hyperplane a point is on. \cite{a18} proposed an efficient algorithm to solve the light bulb problem~(\cite{prr89}) where given $n$ input vectors in $\{-1, 1\}^{d}$, which are all independently and uniformly random excepted one correlated pair of vectors with high inner product. Our problem is a generalization of theirs, in that our setting has $k$-correlated pairs, with small $k$. 

%\Yuanyuan{I uncommented this paragraph.}
\paragraph{Learning Sparse Parities with Noise.}
% \Yuanyuan{Tianyi, can you reword this paragraph? It looks exactly the same as in ~\cite{a18}. } \Tianyi{done}
The Light Bulb Problem was first presented by \cite{v88} as a fundamental illustration of a correlational learning problem. The Light Bulb Problem may be generally viewed as a particular instance of a number of other learning theory issues, such as learning sparse parities with noise, learning sparse juntas with or without noise, and learning sparse DNF \cite{fgkp09}. Notably, \cite{fgkp09} demonstrated that all of these more complex learning issues can be reduced to the Light Bulb Problem as well, and they provide best Light Bulb Problem algorithms following from the fastest known algorithms by applying this reduction.

\paragraph{Acceleration via high-dimensional search data structure.}

Finding points in certain geometric query regions can be done efficiently with the use of high-dimensional search data structures. One popular technique is Locality Sensitive Hashing(LSH)~\cite{im98}, which enables algorithms to find the close points. For example, small $\ell_2$ distance~\cite{diim04, ar15, arn17, air18, dirw19} or large inner product between a query vector $q \in \R^{d}$ and a dataset $S \subset \R^{d}$~\cite{sl14, sl15, sl15b}. MONGOOSE~\cite{clp+21} can retrieve neurons with maximum inner products via a learnable LSH-based data structure~\cite{c02} and lazy update framework proposed by~\cite{cls19} in order to achieve forward pass acceleration.

Besides LSH, space partitioning data structures like partition trees~\cite{m92a, m92b, ac09, cha12}, $k$-$d$ trees~\cite{tog17, cha19} can also be leveraged to search the regions.

\paragraph{Sketching}
Sketching is a well-known technique to improve performance or memory complexity~\cite{cw13}. It has wide applications in linear algebra, such as linear regression and low-rank approximation \cite{cw13,nn13,mm13,rsw16,swz17,hlw17,alszz18,swz19_neurips1,swz19_neurips2,djssw19, gsz23, gsyz23}, training over-parameterized neural network~\cite{syz21, szz21, zhasks21}, empirical risk minimization~\cite{lsz19, qszz23}, linear programming \cite{lsz19,jswz21,sy21,lszzz23,y20,gs22}, distributed problems \cite{wz16,bwz16, jll+21}, clustering~\cite{emz21}, generative adversarial networks~\cite{xzz18}, kernel
density estimation~\cite{qrs+22},
tensor decomposition \cite{swz19_soda,dgs23}, trace estimation~\cite{jpwz21}, projected gradient descent~\cite{hmr18, xss21}, matrix sensing~\cite{dls23_sensing, qsz23}, softmax regression and computation \cite{as23,lsz23,dls23,gsy23,ssz23,hjk+23,as23_tensor,kmz23,zhdk23,gswy23,dsxy23}, John Ellipsoid computation \cite{ccly19,syyz22}, semi-definite programming \cite{gs22}, kernel methods~\cite{acw17, akkpvwz20, cy21, swyz21,as23,as23_tensor,gsz23}, adversarial training~\cite{gqsw22}, cutting plane method \cite{jlsw20,jlsz24}, discrepancy \cite{z22}, federated learning~\cite{rpuisbga20, swyz23}, kronecker projection maintenance~\cite{syyz23_dp},  reinforcement learning~\cite{akl17, wzd+20,ssx21},  relational database \cite{qjs+22}.
 
\section{Preliminary}\label{sec:prelim}

\paragraph{Notations.} For any natural number $n$, we use $[n]$ to denote the set $\{1,2,\dots,n\}$.
We use $A^\top$ to denote the transpose of matrix $A$.
 For a probabilistic event $f(x)$, we define ${\bf 1}\{f(x)\}$ such that ${\bf 1}\{f(x)\} = 1$ if $f(x)$ holds and ${\bf 1}\{f(x)\} = 0$ otherwise. 
We use $\Pr[\cdot]$ to denote the probability, and use $\E[\cdot]$ to denote the expectation if it exists. 
For a matrix $A$, we use $\tr[A]$ for trace of $A$. We use $\Tmat(a,b,c)$ to denote the time of multiplying an $a \times b$ matrix with another $b \times c$ matrix.
For a vector $x \in \{-1, 1\}^d$, 
we will use $x_i$ to denote the $i$-th 
entry of $x$ for any $i \in [d]$, and $x_M := \Pi_{i \in M} x_i$ for any $M \subseteq [d]$.
For two sets $A$ and $B$, we use $A \oplus B$ to denote the symmetric difference
of $A$ and $B$.

We give the fast matrix multiplication time notation in the following definition.
\begin{definition}[Matrix Multiplication]\label{def:tmat}
We use $\Tmat(a,b,c)$ to denote the time of multiplying an $a \times b$ matrix with another $b \times c$ matrix. We use $\omega$ to denote the number such that $\Tmat(n, n, n) = n^{\omega}$. Usually $\omega$ is called the exponent of matrix multiplication.  
Currently, $\omega \approx 2.373$~\cite{w12, l14}. 
\end{definition}

% \begin{lemma}[Chernoff bound \cite{c52}]\label{lem:chernoff}
% Let $X = \sum_{i=1}^n X_i$, where $X_i=1$ with probability $p_i$ and $X_i = 0$ with probability $1-p_i$, and all $X_i$ are independent. Let $\mu = \E[X] = \sum_{i=1}^n p_i$. Then \\
% 1. $ \Pr[ X \geq (1+\delta) \mu ] \leq \exp ( - \delta^2 \mu / 3 ) $, $\forall \delta > 0$ ; \\
% 2. $ \Pr[ X \leq (1-\delta) \mu ] \leq \exp ( - \delta^2 \mu / 2 ) $, $\forall 0 < \delta < 1$. 
% \end{lemma}

 \ifdefined\isshort

We defer some well-known inequality into the full version of the paper \cite{full}.
 \else
We will use Hoeffding bound and the Chebyshev's inequality as a probability tool.
\begin{lemma}[Hoeffding bound~\cite{h63}]\label{lema:hoeffding}
Let $X_{1}, \cdots, X_{n}$ denote $n$ independent bounded variables in $[a_{i}, b_{i}] .$ Let $X=\sum_{i=1}^{n} X_{i}$, then we have
\begin{align*}
    \Pr[|X-\mathbb{E}[X]| \geq t] \leq 2 \exp (-\frac{2 t^{2}}{\sum_{i=1}^{n}(b_{i}-a_{i})^{2}})
\end{align*}

\end{lemma}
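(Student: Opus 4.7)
The plan is to prove the Hoeffding bound via the standard Chernoff-style moment-generating-function argument. First, I would reduce the two-sided bound to a one-sided bound: show $\Pr[X - \E[X] \geq t] \leq \exp(-2t^2/\sum_i (b_i-a_i)^2)$, then apply the same argument to $-X$ (whose summands lie in $[-b_i, -a_i]$, with the same range $b_i - a_i$) and union-bound over the two tails to pick up the factor of $2$.

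For the one-sided bound, I would invoke Markov's inequality on the moment generating function: for any $s > 0$,
\begin{align*}
\Pr[X - \E[X] \geq t] = \Pr[e^{s(X - \E[X])} \geq e^{st}] \leq e^{-st} \cdot \E[e^{s(X - \E[X])}].
\end{align*}
By independence of the $X_i$, the expectation factors as $\prod_{i=1}^n \E[e^{s(X_i - \E[X_i])}]$. The key technical step is Hoeffding's lemma, which states that for a random variable $Y$ supported on $[\alpha, \beta]$ with $\E[Y] = 0$, one has $\E[e^{sY}] \leq \exp(s^2(\beta - \alpha)^2/8)$. I would prove this by exploiting convexity of $e^{sy}$: write $y = \lambda \beta + (1-\lambda)\alpha$ with $\lambda = (y-\alpha)/(\beta-\alpha)$, bound $e^{sy} \leq \lambda e^{s\beta} + (1-\lambda) e^{s\alpha}$, take expectations using $\E[Y] = 0$, and then show the resulting expression in $s$ is dominated by $\exp(s^2(\beta-\alpha)^2/8)$ by analyzing its log and checking that its second derivative is at most $(\beta-\alpha)^2/4$ (a short calculus argument via writing it as the variance of an auxiliary Bernoulli-type random variable).

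Applying Hoeffding's lemma to each centered summand $X_i - \E[X_i] \in [a_i - \E[X_i], b_i - \E[X_i]]$, which has range $b_i - a_i$, gives
\begin{align*}
\E[e^{s(X - \E[X])}] \leq \exp\!\left( \frac{s^2}{8} \sum_{i=1}^n (b_i - a_i)^2 \right).
\end{align*}
Combining with the Markov step yields $\Pr[X - \E[X] \geq t] \leq \exp(-st + s^2 \sum_i (b_i-a_i)^2/8)$. I would then optimize over $s > 0$ by setting $s = 4t/\sum_i (b_i - a_i)^2$, which produces the exponent $-2t^2/\sum_i (b_i-a_i)^2$. Doubling for the two-sided bound finishes the proof.

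The main obstacle is the proof of Hoeffding's lemma itself, since it requires the sharp constant $(\beta-\alpha)^2/8$ in the sub-Gaussian bound; the convexity step is easy, but getting the optimal constant requires the second-derivative calculation on the log-MGF of the convex upper bound. Everything else (Markov, independence, optimization over $s$) is routine.
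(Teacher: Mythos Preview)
Your proof plan is correct and follows the standard Chernoff--Hoeffding approach: Markov on the MGF, factorization by independence, Hoeffding's lemma with the sharp constant $(\beta-\alpha)^2/8$ via the convexity and second-derivative argument, optimization over $s$, and a union bound for the two-sided version. There is no gap.

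However, the paper does not actually prove this lemma at all. It is stated in the preliminaries as a cited tool (Hoeffding~\cite{h63}) and used as a black box later in the correctness argument (Lemma~\ref{lem:correctness_light_bulb}). So there is nothing to compare your approach against; you are simply supplying a proof where the paper only gives a reference.
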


\begin{lemma}[Chebyshev's inequality]\label{lem:chebyshev}
Let  $X$  be a random variable with expectation  $\mu$  and standard deviation  $\sigma$. Then for all  $c > 0$, we have:
\begin{align*}
    \Pr[|X - \mu| \geq c \sigma] \leq \frac{1}{c^2}
\end{align*}
\end{lemma}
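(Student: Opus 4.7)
The plan is to derive Chebyshev's inequality as a short corollary of Markov's inequality applied to the squared deviation. Concretely, I would first introduce the nonnegative random variable $Y := (X - \mu)^2$ and observe that $\E[Y] = \sigma^2$ holds by the very definition of the variance.

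Next, I would invoke Markov's inequality as a background tool: for any nonnegative random variable $Y$ and any $a > 0$, one has $\Pr[Y \geq a] \leq \E[Y]/a$. Choosing $a := c^2 \sigma^2$ (assuming $\sigma > 0$; the case $\sigma = 0$ is degenerate and is handled separately by noting that $X = \mu$ almost surely) yields $\Pr[(X-\mu)^2 \geq c^2 \sigma^2] \leq \sigma^2 / (c^2 \sigma^2) = 1/c^2$.

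To finish, I would rewrite the event on the left. Since $c \sigma \geq 0$ and squaring is monotone on the nonnegative reals, the event $\{(X - \mu)^2 \geq c^2 \sigma^2\}$ coincides exactly with $\{|X - \mu| \geq c \sigma\}$. Substituting into the previous display produces the desired inequality.

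The steps are entirely mechanical, so I do not expect any genuine obstacle. The only items worth recording carefully are (i) that Markov's inequality itself can be justified in one line from the pointwise bound $a \cdot \mathbf{1}\{Y \geq a\} \leq Y$ followed by taking expectations, and (ii) the degenerate case $\sigma = 0$, which requires only the remark that $X$ is then almost surely constant. Everything else is a direct algebraic substitution.
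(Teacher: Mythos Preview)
Your proposal is correct and is the standard textbook derivation of Chebyshev's inequality from Markov's inequality. The paper itself does not supply a proof for this lemma; it is stated in the preliminaries as a well-known probability tool and used without justification, so there is no paper proof to compare against.
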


Next, we present the definition of hash function here. Usually, hashing is used in data structures for storing and retrieving data quickly with high probability. 

\begin{definition}[Hash function]\label{def:hash}
A hash function is any function that projects a value from a set of arbitrary size to a value from a set of a fixed size. Let $U$ be the source set and $D$ be the target set, we can define the hash function $h$ as:
%\begin{align*}
$
    h: U \to D
$
%\end{align*}
\end{definition}

We will need partition the input sets $A \subset \{-1, 1\}^{d}$ and $B \subset \{-1, 1\}^{d}$ into groups using the pairwise independent hashing function. The following definition shows the collision probability of pairwise independent hash functions.
\begin{definition}[Pairwise independence]\label{lem:hash_collision}
A family $\mathcal{H} = \{ h : U \rightarrow R\}$ is said to be pairwise independent if for any two distinct elements $x_1 \neq x_2 \in U$, and any two (possibly equal) values $y_1, y_2 \in R$ such that:
\begin{align*}
    \Pr_{h \in \mathcal{H}}[h(x_1) = y_1 \mathrm{~~and~~} h(x_2) = y_2] = \frac{1}{|R|^2}
\end{align*}
where $R$ is a set with finite elements.
\end{definition}

\fi
% \begin{example}
%     Given a hash function 
%     % Given a hash family $\mathcal{H} = \{ h : U \rightarrow R\}$, where $U$ is a finite field and $x_1,x_2$ are randomly picked in $U$. We define $h_i$ as $h_i(a,b) = a \cdot w_i + b$ and $h_i$ is uniformly distributed in $U$. Given $h_1(x_1,x_2) = y_1$ and $h_2(x_1,x_2) = y_2$, 
% \end{example}

\section{Algorithm for large inner product between two sets}\label{sec:randomized_alg}

\paragraph{Roadmap.} In this section, we first present our main result (Theorem~\ref{thm:main}) for the problem defined in Definition~\ref{def:main_problem} and its corresponding algorithm implementation \ifdefined\isshort Algorithm 1 and 2 in full version \cite{full}. \else in Algorithm~\ref{alg:light_bulb} and Algorithm~\ref{alg:light_bulb2}. \fi
In Section~\ref{sec:correctness} we present the correctness lemma and proof for our algorithm. In Section~\ref{sec:partition_collision} we present the lemma and proof for our partition based on pairwise independent hash function. We present the time complexity analysis our algorithm in Section~\ref{sec:running_time}.

 \ifdefined\isshort

We defer Our Algorithms into the full version of the paper \cite{full}.
 \else

\begin{algorithm}[!ht]\caption{Algorithm for finding the heavy group pairs and finding the heavy inner product pair within one group pair. 
} 
\label{alg:light_bulb}
\begin{algorithmic}[1]
\State {\bf data structure}
\State {\bf members}
\State \hspace{4mm} $\theta \in \R$ \Comment{The threshold for group pair score $C_{i,j}$}
\State {\bf end members}
\State
\Procedure{FindHeavyGroupPairs}{$ \{A_1, \cdots, A_h\} \subset \{-1, 1\}^d, \{B_1, \cdots, B_h\} \subset \{-1, 1\}^d,$ $n \in \mathbb{N}_{+}, t \in \mathbb{N}_{+} $ %\Tianyi{We previously defined that there are $h$ groups for $A$ or $B$. Why we use $m$ here?}
}
\State $R \gets \emptyset$ %\lianke{fix the alg}
\For{$\ell = 1 \rightarrow 10 \log(n)$}
    \For{$i = 1 \rightarrow h$} 
        \For{$j = 1 \rightarrow h$} \Comment{Pick $a^{x} , b^y \in\{-1,1\}$ at uniformly random}
            % \If{$i \neq j$}\Comment{Try to find the outlier pair across two different group indexes $i$ and $j$}
                \State  $C_{i, j}= \sum_{s=1}^{t}[c_{s} \cdot(\sum_{x \in A_{i}} a^{x} \cdot x_{M_{s}}) \cdot(\sum_{y \in B_{j}} a^{y} \cdot y_{M_{s}})]$   %\lianke{computing $C_{i,j}$ needs to be optimized}
                \If{$C_{i, j} \geq 2 \theta$}
                    \State $R.\textsc{append}((i, j))$ 
                \EndIf
            % \EndIf
        \EndFor
    \EndFor
\EndFor \Comment{$|R|=10 k \log(n)$}
\State \Return $R$
\EndProcedure
\State
\Procedure{SolveOneGroupPair}{$A_i \subset \{-1 , 1\}^d,  B_j \subset \{-1 , 1\}^d, g \in \mathbb{N}_{+}$}
\For{$q = 1 \rightarrow g$} \Comment{Brute force search within each group pair}
        \For{$l = 1 \rightarrow g$}
            \If{$\langle A_{i, q}, B_{j, l} \rangle \geq \rho d$ }
                % \State $O.\textsc{append}(i \cdot g + q, j \cdot g + l)$
                \State \Return $(i \cdot g + q, j \cdot g + l)$ \Comment{Find the heavy inner product vector pair within group $A_i$ and $B_j$}
            \EndIf
        \EndFor
    \EndFor
    % \State \Zhao{You don't return anything?}\lianke{results is saved in $O$, which will be returned in the \textsc{FindCorrelated} function.} \Zhao{Hmm, It's better to return something, then it will be easier to write lemma correctness.}
\EndProcedure

\State {\bf end data structure}
\end{algorithmic}
\end{algorithm}
\fi

\ifdefined\isshort

 \else
\begin{algorithm}[!ht]\caption{Algorithm for heavy inner product between two sets problem. %\Yuanyuan{Tianyi, the name of two algorithm should be different, can you fix this?}
} 
\label{alg:light_bulb2}
\begin{algorithmic}[1]
\State {\bf data structure}
\Procedure{FindCorrelated}{$A  \subset \{-1, 1\}^{d}, B \subset \{-1, 1\}^{d}, n \in \mathbb{N}_{+}, k \in \mathbb{N}_{+}, \tau \in \R_{+}, w \in \R_{+}, \rho \in (0,1), \delta \in \R$} %\Comment{$h_A$ and $h_B$ are pairwise hash function to partition $A$ and $B$ into groups.}
\State Choose two pairwise independent hash functions $h_A , h_B : \{-1, 1\}^{d} \rightarrow [n^{2/3}]$
\State $h \gets n^{2/3}, g \gets n^{1/3}$, $v \gets \delta(w / \kappa) \log n$, $r \gets \lceil\log_{w}(\tau n^{1 / 3})\rceil$, $\theta \gets { \tau n^{1/3} v^{r}}/ {3}$
\State $t \gets \sum_{i=0}^{r}{d \choose i}$
\State Partition $A$ into $h$ groups $A_1, \cdots, A_h$ and $B$ into $B_1, \cdots, B_h$. Each contains $g$ vectors.
% \State \lianke{we need to repeat the following for $O(\log n)$ times.}
% \State {\color{blue}/*Step 1. Time complexity is  $O(n^{2 \omega / 3+o(1)})$.*/}
\State $R \gets \textsc{FindHeavyGroupPairs}( \{A_1, \cdots, A_h\}$, \\$\{B_1, \cdots, B_h\}, n, t)$ \Comment{Time complexity is  $O(n^{2 \omega / 3+o(1)})$, Algorithm~\ref{alg:light_bulb}}
% \State  % How large can $|R|$ be? \Zhao{Lianke, can you fix this?} }
% \State {\color{blue}/*Step 2. Time complexity is $O(k g^2) = O(k n^{2/3})$.*/}
% \State Only save the top-$k$ frequent pairs $(i,j)$ in $R$ and drop others.
\State $F \gets \emptyset$ %\Yuanyuan{Change $O$ to something else. } \Zhao{I changed $O$ to $F$}
\For{$\ell = 1 \rightarrow |R|$}
    \State $(i,j) \gets R_{\ell}$ %\Comment{Obtain the group pair index which may contains heavy inner product vector pair with high probability}
    \State $p \gets \textsc{SolveOneGroupPair}(A_i, B_j, g)$ \Comment{Algorithm~\ref{alg:light_bulb}}
    \State $F.\textsc{Append}(p)$ 
\EndFor
\State \Return $F$
\EndProcedure

\State {\bf end data structure}
\end{algorithmic}
\end{algorithm}
\fi

% \Zhao{Lianke, write a roadmap for this section. Write one lemma per subsection, Lianke.} \lianke{added roadmap.}

\begin{theorem}[Formal version of Theorem~\ref{thm:main_prelim}]\label{thm:main}
%\Zhao{You need to make this statement as formal.}
Given two sets $A \subset \{-1,+1\}^d$ and $B \subset \{-1,+1\}^d$ with $|A|=|B|=n$, 
which are all independently and uniformly random except for $k$ vector pairs $\{(a_1, b_1),\cdots, (a_k, b_k) \} \subset A \times B$ such that $\forall i \in [k], \langle a_i,b_i\rangle \geq \rho \cdot d$, for some $0 < \rho \leq 1$. 
For every $\epsilon, \rho>0$, there is a $c_0>0$ such that for correlation $\rho$, we can find the $k$ pairs $\{(a_1, b_1),\cdots, (a_k, b_k) \}$ in randomized time 
\begin{align*}
   O(n^{2 \omega / 3+ o(1)})
\end{align*}
 whenever $d=c_0 \log n$ with probability at least $1 - 1/n^{10}- k^2/n^{2/3}$. %\Zhao{Not fixed} \lianke{fixed the $n^{-2/3}$ thing and check across the paper again}
\end{theorem}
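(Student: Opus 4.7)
The plan is to follow the three-stage structure of the technique overview, making each concentration bound quantitative. First, I would fix the threshold $v = \rho d$ with $d = c_0 \log n$ and use the Hoeffding bound (Lemma~\ref{lema:hoeffding}) to argue that for each uncorrelated pair $(x,y) \in A \times B$, the inner product $\langle x,y\rangle$ has $|\langle x,y\rangle| < v$ with probability at least $1 - 1/n^{13}$, provided $c_0$ is chosen large enough relative to $\rho$. A union bound over all $O(n^2)$ pairs then gives a global separation event of probability at least $1 - 1/n^{11}$ on which only the $k$ planted pairs cross the threshold $\rho d$.

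Next, I would introduce the two pairwise independent hash functions $h_A, h_B$ that split $A$ and $B$ into $h = n^{2/3}$ groups of size $g = n^{1/3}$. Two things need to be controlled here. First, using Definition~\ref{lem:hash_collision}, the probability that any two of the $k$ planted pairs land in a common group pair is at most $\binom{k}{2}/h^2 \cdot h = O(k^2/n^{2/3})$, which contributes the $k^2/n^{2/3}$ error term in the theorem. On the complementary event, each planted pair $(a_i,b_i)$ sits in a distinct group pair $(i^*, j^*)$. Second, for each group pair $(i,j)$ I would define the random variable $C_{i,j} = \sum_{x \in A_i}\sum_{y \in B_j} a^x a^y p(x_1 y_1, \ldots, x_d y_d)$ with $p(z) = (z_1 + \cdots + z_d)^r$ and $r = \lceil \log_w(\tau n^{1/3}) \rceil$. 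The key calculation: when no planted pair lies in $(i,j)$, $\E[C_{i,j}] = 0$ and Chebyshev (Lemma~\ref{lem:chebyshev}) gives $|C_{i,j}| \leq \theta := \tau n^{1/3} v^r / 3$ with probability $3/4$; when a planted pair lies there, the contribution of $a^{a_i} a^{b_i} p(\cdot)$ is at least $(\rho d)^r \gg \theta$ in absolute value, pushing $|C_{i,j}| \geq 2\theta$ with constant probability over the $a^x$ signs. Repeating the sign draw $\Theta(\log n)$ times and taking the top-$k$ pairs by the count of ``$|C_{i,j}| \geq 2\theta$'' events boosts the success probability via a Chernoff bound to $1 - 1/n^{10}$.

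For the running time, I would expand $p(x_1 y_1, \ldots, x_d y_d)$ in the monomial basis $\{x_M y_M : |M| \leq r\}$ to write $C_{i,j} = \sum_{s=1}^t c_s \cdot U_{i,s} U_{j,s}$ where $U_{i,s} = \sum_{x \in A_i} a^x x_{M_s}$, so that the entire $h \times h$ matrix $C$ is $U V^\top$ with $V_{i,s} = c_s U_{i,s}$. Since $t = \sum_{i=0}^r \binom{d}{i} = n^{o(1)}$ under $d = c_0 \log n$ and $r = O(\log n)$, the matrix multiplication costs $\Tmat(n^{2/3}, n^{o(1)}, n^{2/3}) = O(n^{2\omega/3 + o(1)})$; the auxiliary computation of $U$ via the $P^i = L^i \tilde L^{i\top}$ trick is bounded by the same quantity as sketched in the overview. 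The final brute-force stage over the $O(k \log n)$ candidate group pairs costs $O(k \log n \cdot g^2 d) = \wt O(k n^{2/3})$, which is absorbed.

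The main obstacle I expect is the sign-polynomial analysis of $C_{i,j}$: one needs to bound the second moment $\E[C_{i,j}^2]$ tightly enough to apply Chebyshev at threshold $\theta = \Theta(n^{1/3} v^r)$, which requires that the cross terms from distinct $(x,y)$ pairs essentially cancel thanks to the independent $\pm 1$ signs $a^x, a^y$, while the diagonal terms contribute variance $O(n^{2/3} v^{2r})$. Choosing $\tau$ as a sufficiently large absolute constant in the definition of $r$ and $\theta$ is what forces the gap between the planted signal $(\rho d)^r$ and the threshold $2\theta$ to be large enough. Once this variance calculation is in place, all remaining steps (union bounds, Chernoff boosting, hash-collision accounting, and the matrix-multiplication cost) combine to yield the claimed $1 - 1/n^{10} - k^2/n^{2/3}$ success probability in time $O(n^{2\omega/3 + o(1)})$.
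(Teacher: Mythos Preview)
Your overall strategy matches the paper's exactly: Hoeffding for the separation event, pairwise-independent hashing into $h=n^{2/3}$ buckets with a $k^2/n^{2/3}$ collision term, the sign-weighted polynomial score $C_{i,j}$ analyzed via Chebyshev, $\Theta(\log n)$ independent repetitions for amplification, and the $U V^\top$ matrix multiplication to compute all scores at once. The variance calculation you identify as the ``main obstacle'' is indeed the crux, and your sketch of it (cross terms cancel by sign independence, diagonal contributes $O(n^{2/3}v^{2r})$) is precisely what the paper does.

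There is one genuine slip in the running-time portion. You assert that $t=\sum_{i=0}^{r}\binom{d}{i}=n^{o(1)}$ because $d=c_0\log n$ and $r=O(\log n)$. This is false: both $d$ and $r$ are $\Theta(\log n)$, and $\binom{d}{r}$ is then $n^{\Theta(1)}$, not $n^{o(1)}$. The paper's Lemma~\ref{lem:bound_t_and_u} carefully computes $t=n^{2/3+O(\log\log w/\log w)}$, i.e.\ $t=n^{2/3+o(1)}$ once $w$ is chosen large; likewise $u=n^{1/3+o(1)}$. Two consequences. First, the product $C=UV^\top$ is a $\Tmat(n^{2/3},n^{2/3+o(1)},n^{2/3})$ computation, which is where the $O(n^{2\omega/3+o(1)})$ genuinely comes from; under your premise $t=n^{o(1)}$ the product would only cost $n^{4/3+o(1)}$, so your stated bound would be inexplicably loose. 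Second, and more importantly, with $t=n^{2/3+o(1)}$ the naive construction of $U$ costs $\Omega(h\cdot t\cdot g)=\Omega(n^{5/3})$, which exceeds the target; this is exactly why the $P^i=L^i\tilde L^{i\top}$ trick is not an optional optimization but a necessary step, reducing the cost to $h\cdot\Tmat(u,g,u)=O(n^{(2+\omega)/3+o(1)})$. Your proposal mentions the trick only as ``sketched in the overview'' without recognizing that it is forced by the true size of $t$. Fixing the estimate of $t$ (and hence $u$) to $n^{2/3+o(1)}$ and $n^{1/3+o(1)}$ respectively repairs the argument and brings it in line with the paper.
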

\begin{proof}
The correctness of the theorem follows by Lemma~\ref{lem:correctness_light_bulb}, and the proof of running time follows by Lemma~\ref{lem:time_light_bulb}. 

\end{proof}

% \vspace{-3mm}

\subsection{Correctness of \textsc{FindCorrelated} Algorithm}\label{sec:correctness}
We first present the following lemma to prove the correctness of \textsc{FindCorrelated} in \ifdefined\isshort Algorithm 2 in full version \cite{full} \else Algorithm~\ref{alg:light_bulb2} \fi.

\begin{lemma}[Correctness]\label{lem:correctness_light_bulb}
For problem in Defnition~\ref{def:main_problem}, and for every $\epsilon, \rho>0$, there is a 
$c_0>0$ such that for correlation $\rho$ we can find the $k$ pairs $\{(a_1, b_1),\cdots, (a_k, b_k) \}$  whenever $d=c_0 \log n$ with probability at least $1 - 1/n^{10}- k^2/n^{2/3}$. 
\end{lemma}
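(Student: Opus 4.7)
The plan is to decompose correctness into three probabilistic events and control each one, then finish by a union bound.

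First, I would establish that uncorrelated pairs behave "typically". By Hoeffding (Lemma \ref{lema:hoeffding}), for any independent uniform $x, y \in \{-1,1\}^d$, $\Pr[|\langle x,y\rangle| \geq v] \leq 2 \exp(-v^2/(2d))$. With $v = \rho d$ and $d = c_0 \log n$ for a sufficiently large $c_0$, this probability is at most $1/n^{13}$; a union bound over the at most $n^2$ uncorrelated pairs then yields $|\langle x,y\rangle| < v$ for every uncorrelated pair with probability at least $1 - 1/n^{11}$. Call this event $\mathcal{E}_1$; under $\mathcal{E}_1$, every uncorrelated summand $p(x_1 y_1,\ldots,x_d y_d) = (\langle x, y\rangle)^r$ is bounded in magnitude by $v^r$.

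Second, I would analyze the hash-based partition. Using the pairwise independence of $h_A$ and $h_B$ (Definition \ref{lem:hash_collision}), the probability that two distinct heavy vectors $a_i, a_j \in A$ land in the same $A$-bucket is $1/h = 1/n^{2/3}$, and similarly for the pairs $b_i, b_j \in B$. A union bound over the $\binom{k}{2}$ index pairs and both sides gives total collision probability at most $k^2/n^{2/3}$. Call the complement event $\mathcal{E}_2$; on $\mathcal{E}_2$, each of the $k$ correlated pairs sits in its own distinct bucket pair, whose every other entry is uncorrelated.

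Third, conditioned on $\mathcal{E}_1 \cap \mathcal{E}_2$, I would analyze the score $C_{i,j}$ with random signs $a^x, a^y \in \{-1,1\}$. For a clean bucket pair (containing no correlated pair), the signs are independent of the data, so $\E[C_{i,j}] = 0$; and $\mathcal{E}_1$ caps each summand by $v^r$, giving $\var[C_{i,j}] \leq g^2 v^{2r} = n^{2/3} v^{2r}$. Chebyshev (Lemma \ref{lem:chebyshev}) then yields $\Pr[|C_{i,j}| \geq 2\theta] \leq 1/4$ for the threshold $\theta = \tau n^{1/3} v^r / 3$ with a suitably large constant $\tau$. For a heavy bucket pair, the single correlated summand contributes a $\pm 1$-signed term of magnitude $(\rho d)^r$, and by the choice $r = \lceil \log_w(\tau n^{1/3})\rceil$ this exceeds $\theta$ by a constant factor, while the remaining $g^2 - 1$ uncorrelated summands admit the same Chebyshev bound; hence $|C_{i,j}| \geq 2\theta$ with probability at least $3/4$. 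Repeating the random-sign experiment $10 \log n$ times and keeping the $k$ bucket pairs for which $|C_{i,j}| \geq 2\theta$ occurred most often, a Chernoff bound on the $3/4$ vs.\ $1/4$ gap pushes the per-bucket misclassification probability below $1/n^{12}$; a union bound over $h^2 = n^{4/3}$ bucket pairs gives failure probability at most $1/n^{10}$ for this identification step.

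Finally, on the joint event, the brute force step \textsc{SolveOneGroupPair} deterministically recovers the correlated vector pair from each identified bucket pair, because under $\mathcal{E}_1$ only the correlated pair can cross the threshold $\rho d$. A union bound on the three failure events delivers the stated success probability of at least $1 - 1/n^{10} - k^2/n^{2/3}$. The main obstacle I expect is the variance calculation for $C_{i,j}$ on clean bucket pairs: the bound $\var[C_{i,j}] \lesssim g^2 v^{2r}$ depends essentially on the independence of the random signs $a^x, a^y$ to kill cross-terms in the expansion, while $\mathcal{E}_1$ is needed to cap each per-term magnitude; since Chebyshev only barely clears the threshold $2\theta$, the constants $\tau$ and $w$ in the polynomial degree $r$ and the threshold $\theta$ must be tuned jointly to simultaneously give the $3/4$ lower bound in heavy groups and the $1/4$ upper bound in clean groups.
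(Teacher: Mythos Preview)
Your overall architecture matches the paper's proof closely: the same three events (Hoeffding for uncorrelated pairs, hash collisions, Chebyshev on the score $C_{i,j}$), the same amplifying polynomial $p$, and the same repetition-plus-union-bound finish. However, there is one genuine gap that breaks the amplification step.

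You set $v = \rho d$, i.e.\ you use the \emph{same} threshold for the Hoeffding tail of uncorrelated pairs and for the correlated pairs. With that choice, each uncorrelated summand is bounded by $v^r = (\rho d)^r$, the clean-bucket standard deviation is at most $n^{1/3}(\rho d)^r$, and hence $\theta = \tau n^{1/3}(\rho d)^r/3$. But the single correlated summand also has magnitude only $(\rho d)^r$, so you would need $(\rho d)^r \gtrsim \theta \asymp n^{1/3}(\rho d)^r$, which is impossible for large $n$. No joint tuning of $\tau$ and $w$ can repair this, because the ratio of the correlated signal to the noise level is $\Theta(n^{-1/3})$ regardless of those constants.

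The paper avoids this by introducing a separation factor $w>1$: it sets $v = \rho d / w$ (equivalently $\rho d = w v$) and chooses $c_0$ large enough that Hoeffding still gives $|\langle x,y\rangle| < v$ w.h.p.\ for uncorrelated pairs. Then the correlated summand has magnitude at least $(wv)^r = w^r v^r$, and the choice $r = \lceil \log_w(\tau n^{1/3})\rceil$ ensures $w^r \geq \tau n^{1/3}$, so the correlated signal is $\geq \tau n^{1/3} v^r = 3\theta$, beating the $n^{1/3}v^r$ noise scale by the constant factor $\tau$. Once you insert this gap parameter $w$ into your definition of $v$, the rest of your argument goes through and is essentially identical to the paper's.
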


 \ifdefined\isshort

We defer the proofs into the full version of the paper \cite{full}.
 \else
 
\begin{proof}

For two constants $\delta, w>0$ to be determined, we will pick $c_0=\delta w^{2} / \rho^{2}$. Let $A, B \subset\{-1,1\}^{d}$ be the set of input vectors, and let $\wt{x} \in A, \wt{y} \in B$
% \Zhao{Use $\wt{x}$ instead of $x'$ in general.} \lianke{fixed}
denote the heavy inner product pairs which we are trying to find.

For distinct $x \in A, y \in B$ other than the heavy inner product pairs, the inner product $\langle x, y\rangle$ is a sum of $d$ uniform independent $\{-1,1\}$ values. 

% \lianke{add berinstand. hoeffiding}
Let $v:=\delta(w / \kappa) \log n$. For large enough $\delta$, by a Hoeffding bound stated in Lemma~\ref{lema:hoeffding} we have:
\begin{align*}
     & ~ \Pr[|\langle x, y \rangle - \E[\langle x, y \rangle]| \geq v] \\ 
    = & ~ \Pr[|\langle x, y \rangle| \geq v] \\
    \leq & ~ 2\exp(-\frac{v^2}{2n}) \\
    = & ~ 2\exp(-\frac{\delta^2 w^2}{2 n \kappa^2 \log^2(n)}) \\
    \leq & ~ 1/n^{13} 
\end{align*}%\Tianyi{the last step should be $\leq 1/n^{13}$}\Yuanyuan{Yes.}
where the first step follows that $\E[\langle x, y \rangle] = 0$,
the second step follows Hoeffding bound (Lemma~\ref{lema:hoeffding}), %\Zhao{ref the lemma ID here} ),\lianke{ref it}
the third step comes from $v=\delta(w / \kappa) \log n$,
and the fourth step follows that $\delta^2 \geq \frac{26 n \kappa^2 \log^3(n)}{w^2}$. %\Tianyi{todo: fix $\delta$} %\Yuanyuan{$\delta$ cannot be constant to satisfy this. } \Zhao{We should use some other letters here. I agree. Don't use $\delta$.}. \Tianyi{I guess $\delta$ is an input for our Algorithm \ref{alg:light_bulb2} and $\kappa$ is defined as a random constant? } \Tianyi{If so, the input $\delta$ need to satisfy this inequality and you want me to change all $\delta$ in our paper to another letter?}
Therefore, we have $|\langle x, y\rangle| < v$ with probability at least $1-1 / n^{13}$.

Hence, by a union bound over all $n^2 - k$ pairs of uncorrelated vectors %\Zhao{How many such pairs? $n^2$? if yes, you should say it. Right now, you didn't say it.}\lianke{added $n^2 - k$ uncorrelated vector pairs}
, we have $|\langle x, y\rangle| < v$ for all such $x, y$ with probability at least $1-1 / n^{11}$. We assume henceforth that this is the case. Meanwhile, $\langle \wt{x}, \wt{y}\rangle \geq \rho d=w v$.

Arbitrarily partition $A \subset\{-1,1\}^{d}$ into $h:=n^{2 / 3}$ groups $A_{1}, \ldots, A_{h} \subset\{-1,1\}^{d}$ of size $g:=n / h=n^{1 / 3}$ per group, and partition $B \subset\{-1,1\}^{d}$ into $h$ groups $B_{1}, \ldots, B_{h} \subset\{-1,1\}^{d}$ of size $g$ per group too. According to Lemma~\ref{lem:group_collision}, we condition on that none of the group pair contains more than one heavy inner product vector pair.

For each $x \in A$, our algorithm picks a value $a^{x} \in\{-1,1\}$ independently and uniformly at random. For a constant $\tau>0$ to be determined, let 
\begin{align*}
  r:=\lceil\log_{w}(\tau n^{1 / 3})\rceil ,
\end{align*}
and define the polynomial $p: \mathbb{R}^{d} \rightarrow \mathbb{R}$ by
\begin{align*} 
p(z_{1}, \ldots, z_{d})=(z_{1}+\cdots+z_{d})^{r}. 
\end{align*}

Our goal is, for each $(i, j) \in[h]^{2}$, to compute the value
\begin{align}\label{eq:compute_C}
    C_{i, j}:=\sum_{x \in A_{i}} \sum_{y \in B_{j}} a^{x} \cdot a^{y} \cdot p(x_{1} y_{1}, \ldots, x_{d} y_{d}) .
\end{align}

% \paragraph{Solving the problem using $C_{i, j}$}
First we explain why computing $C_{i, j}$ is useful in locating the groups which contain the correlated pair.

Denote 
\begin{align*}
p(x, y):=p(x_{1} y_{1}, \ldots, x_{d} y_{d}). 
\end{align*}

Intuitively, $p(x, y)$ is computing an amplification of $\langle x, y\rangle$. $C_{i, j}$ sums these amplified inner products for all pairs $(x, y) \in A_{i} \times B_{j}$. We can choose our parameters so that the amplified inner product of the correlated pair is large enough to stand out from the sums of inner products of random pairs with high success probability.

Let us be more precise. Recall that for uncorrelated $x, y$ we have $|\langle x, y\rangle| \leq v$, and hence $|p(x, y)| \leq v^{r}$ %\Tianyi{I guess it should be $|p( \langle x, y \rangle)| \leq v^{r}$}. \Yuanyuan{$p$ has two definitions, both notations are correct.}

Similarly, we have 
\begin{align*}
  |p(\wt{x}, \wt{y})| 
\geq ~ (w v)^{r} %\\
\geq  ~  \tau n^{1 / 3} v^{r}. 
\end{align*}
where the first step comes from $(\wt{x}, \wt{y})$ is the correlated pair %\Tianyi{I guess it should be correlated?}\Yuanyuan{Nice catch.}
, and the second step comes from $r=\lceil\log_{w}(\tau n^{1 / 3})\rceil $
% \Zhao{What is the reason}\lianke{added}

For $x \in A, y \in B$, define 
\begin{align*}
a^{(x, y)}:=a^{x} \cdot a^{y}.
\end{align*}

Notice that, %for $i \neq j$,
\begin{align*}
   C_{i, j}=\sum_{x \in A_{i}, y \in B_{j}} a^{(x, y)} \cdot p(\langle x, y\rangle),
\end{align*}
where the $a^{(x, y)}$ are pairwise independent random $\{-1,1\}$ values.

We will now analyze the random variable $C_{i, j}$ where we think of the vectors in $A$ and $B$ as fixed, and only the values $a^{x}$ as random. Consider first when the correlated pair are not in $A_{i} \in \{-1,1\}^{d}$ and $B_{j} \in \{-1,1\}^{d} $. Then, $C_{i, j}$ has mean 0 , and (since variance is additive for pairwise independent variables) $C_{i, j}$ has variance at most 
\begin{align*}
   |A_{i}| \cdot|B_{j}| \cdot \max_{x \in A_{i}, y \in B_{j}}|p(\langle x, y\rangle)|^{2} \leq n^{2 / 3} \cdot v^{2 r}. 
\end{align*}

 For a constant $\tau = \sqrt{10} $, we have that 
 \begin{align*}
 \Pr[|C_{i, j}| \leq \tau/3 \cdot n^{1 / 3} v^{r} ] 
    \geq  ~ \frac{9}{\tau^2} 
    = ~9/10
 \end{align*}
where the first step follows that $C_{i,j}$ has mean $0$, $\sigma = n^{1 / 3} v^{r}$ and the Chebyshev inequality from Lemma~\ref{lem:chebyshev}, and the second step comes from $\tau = \sqrt{10} $.

Let $\theta=\tau n^{1 / 3} v^{r} / 3$, so $|C_{i, j}| \leq \theta$ with probability at least $9/10$.
Meanwhile, if $\wt{x} \in A_{i}$ and $\wt{y} \in B_{j}$, then $C_{i, j}$ is the sum of $a^{(\wt{x}, \wt{y})} p(\langle \wt{x}, \wt{y}\rangle)$ and a variable $C'$ distributed as $C_{i, j}$ was in the previous paragraph.

Hence, since $|p(\langle \wt{x}, \wt{y}\rangle)| \geq \tau n^{1 / 3} v^{r}=3 \theta$, and $|C'| \leq \theta$ with probability at least $9/10$, we get by the triangle inequality that $|C_{i, j}| \geq 2 \theta$ with probability at least $9/10$.

Hence, if we repeat the process of selecting the $a^{x}$ values for each $x \in A$ independently at random $10 \log n$ 
times, and check all group pairs $(A_{i}, B_{j})$ having $|C_{i, j}| \geq 2 \theta$ in a brute force manner to locate the heavy inner product vector pairs. For each group pair brute force check, the time complexity is $O(n^{2/3})$ and there are $10 k \log n$ 
group pairs to check. The failure probability is $1 - k/n^{11}$.

In all, by a union bound over all possible errors and the probability that any group pair contains more than one heavy inner product vector pairs from Lemma~\ref{lem:group_collision}, this will succeed with probability at least 
\begin{align*}
 ~ 1 - (k+1)/n^{11}- k^2 / n^{2/3} %\\
\geq  ~ 1- 1/n^{10} - k^2/n^{2/3}
\end{align*} 
\end{proof}

\fi

\subsection{Partition Collision Probability}\label{sec:partition_collision}
In the following lemma, we prove that if there are $k$ heavy inner product vector pairs in $A$ and $B$, the probability of none of the group pair contains more than one heavy inner product vector pair is $1 - k^2 / n^{2/3}$.
\begin{lemma}\label{lem:group_collision}
With two pairwise independent hash function $h_A$ and $h_B$, we partition two sets $A, B \subset \{-1, 1\}^{d}$ into $h = n^{2/3}$ groups respectively. Suppose there are $k$ heavy vector inner product pairs $\{(a_1, b_1), \cdots, (a_k, b_k)\} \subset A \times B$, the probability where none of the group pair contains more than one heavy inner product vector pair is $1 - k^2 / n^{2/3}$.
\end{lemma}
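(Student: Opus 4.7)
The plan is a straightforward union bound over pairs of heavy pairs, reducing to a single hash collision event. First I would observe that the bad event $\mathcal{E}$---some group pair $(A_u, B_v)$ contains at least two distinct heavy pairs---decomposes as $\mathcal{E} = \bigcup_{1 \le i < j \le k} \mathcal{E}_{i,j}$, where $\mathcal{E}_{i,j}$ is the event that both $(a_i,b_i)$ and $(a_j,b_j)$ land in the same group pair. Since $\mathcal{E}_{i,j}$ holds if and only if $h_A(a_i) = h_A(a_j)$ and $h_B(b_i) = h_B(b_j)$, we have $\Pr[\mathcal{E}_{i,j}] \le \Pr[h_A(a_i) = h_A(a_j)]$ whenever $a_i \ne a_j$, and $\Pr[\mathcal{E}_{i,j}] \le \Pr[h_B(b_i) = h_B(b_j)]$ whenever $b_i \ne b_j$.

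Next I would invoke the pairwise independence of $h_A$ (or $h_B$) from Definition 3.4: summing the joint probability $\Pr[h_A(a_i) = \ell \text{ and } h_A(a_j) = \ell] = 1/h^2$ over $\ell \in [h]$ yields $\Pr[h_A(a_i) = h_A(a_j)] = 1/h = 1/n^{2/3}$. Since any two distinct heavy pairs must differ in at least one of their two coordinates, at least one of the two pairwise-independence bounds always applies, giving $\Pr[\mathcal{E}_{i,j}] \le 1/n^{2/3}$ for every $i<j$.

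Finally, a union bound over the $\binom{k}{2} \le k^2$ pairs of indices yields $\Pr[\mathcal{E}] \le k^2/n^{2/3}$, and complementing gives the claimed $1 - k^2/n^{2/3}$ success probability. There is no real obstacle here beyond the arithmetic of the union bound; the main observation is simply that pairwise independence of a single one of the two hashes already suffices to match the stated bound, so one need not appeal to joint independence between $h_A$ and $h_B$ (which, if available, would in fact yield a stronger $k^2/n^{4/3}$ bound).
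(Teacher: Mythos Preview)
Your proposal is correct and follows essentially the same route as the paper: bound the collision probability for any two heavy pairs by $1/h = n^{-2/3}$ via pairwise independence, then union bound over the (at most) $k^2$ pairs. If anything, you are more careful than the paper's own proof, which asserts the $n^{-2/3}$ collision bound without the case analysis on which coordinate differs; your remark that a single hash's pairwise independence already suffices (and that joint independence of $h_A,h_B$ would give $k^2/n^{4/3}$) is a correct and useful clarification.
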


 \ifdefined\isshort

We defer the proofs into the full version of the paper \cite{full}.
 \else
 
\begin{proof}
If any two vector pairs $(a_x, b_x), (a_y, b_y) \in \{(a_1, b_1), \cdots, (a_k, b_k)\} \subset A \times B$ collide in a group pair $(A_i, B_j)$, according to the collision property of hash function $h_A$ and $h_B$, the probability is ${n^{-2/3}}$. By union bound across all $k^2$ possible heavy inner product vector pairs, we know that the probability of none of any two vector pairs in $\{(a_1, b_1), \cdots, (a_k, b_k)\} $ collide into the same group is $1 - k^2 / n^{2/3}$.

This completes the proof.
\end{proof}
\fi

% \lianke{this is the end of hash related new lemmas.}

\subsection{Running Time of \textsc{FindCorrelated} Algorithm}\label{sec:running_time}

We split the time complexity of \textsc{FindCorrelated} in \ifdefined\isshort Algorithm 2 in the full version \cite{full} \else Algorithm~\ref{alg:light_bulb2} \fi into two steps and analyze the first step in Lemma~\ref{lem:time_step_1}.

\subsubsection{Running time of finding heavy group pairs} % \Zhao{Don't call it step 1. Look at my comments at Step 2.} \lianke{updated the name}}
\begin{lemma}[Time complexity of \textsc{FindHeavyGroupPairs}]\label{lem:time_step_1} %\Zhao{This step has a name now. Say it's name.}\lianke{added}
For every $\epsilon, \rho>0$, \textsc{FindHeavyGroupPairs} in \ifdefined\isshort Algorithm 1 in the full version of the paper \cite{full} \else Algorithm~\ref{alg:light_bulb} \fi for heavy inner product between two sets problem in Definition~\ref{def:main_problem} of correlation $\rho$ runs in 
\begin{align*} 
O(n^{2 \omega / 3+o(1)})
\end{align*}
time with a $c_0>0$ and $d = c_0 \log n$ to find all the group pairs which contains the heavy inner product vector pairs. % \Zhao{$c_0$ is not used anywhere. this is strange to me}\lianke{$d=c_0 \log n$}. \Zhao{Then you should say that in lemma statement.}\lianke{added it in statement}
\end{lemma}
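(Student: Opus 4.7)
The plan is to account separately for (i) the cost of assembling the score matrix $C = (C_{i,j})_{i,j \in [h]}$, and (ii) the cost of scanning $C$ for entries with $|C_{i,j}| \ge 2\theta$, and then to multiply by the $O(\log n)$ outer repetitions over independent choices of the signs $a^x$. First I would reformulate $C_{i,j}$ as a matrix product. Expanding $p(x,y) = (x_1y_1 + \cdots + x_dy_d)^r$ by the multinomial theorem, and using $x_i^2 = y_i^2 = 1$ to collapse even powers, one gets an identity of the form
\begin{align*}
C_{i,j} = \sum_{s=1}^{t} c_s \Bigl(\sum_{x \in A_i} a^x x_{M_s}\Bigr)\Bigl(\sum_{y \in B_j} a^y y_{M_s}\Bigr),
\end{align*}
where $M_1,\ldots,M_t$ enumerates the subsets of $[d]$ of size at most $r$ (so $t = \sum_{i=0}^{r}\binom{d}{i}$) and the scalars $c_s$ are computable in $\mathrm{poly}\log n$ time. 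Defining $U,V \in \mathbb{Z}^{h \times t}$ by $U_{i,s} = \sum_{x \in A_i} a^x x_{M_s}$ and $V_{j,s} = c_s U_{j,s}$, the score matrix equals $C = U V^\top$.

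Second, I would verify the dimension bounds that make everything balance. Since $d = c_0 \log n$ and $r = \lceil \log_w(\tau n^{1/3}) \rceil = \Theta(\log n)$, by choosing the constants $c_0, w, \tau$ in a way consistent with the proof of Lemma~\ref{lem:correctness_light_bulb} we have $t = n^{2/3 + o(1)}$ and, correspondingly, the number $u := \sum_{i=0}^{\lceil r/2 \rceil}\binom{d}{i}$ of half-size subsets satisfies $u = n^{1/3 + o(1)}$. Given this, the final product $C = UV^\top$ has cost $\Tmat(h, t, h) = \Tmat(n^{2/3}, n^{2/3 + o(1)}, n^{2/3}) = O(n^{2\omega/3 + o(1)})$ by standard rectangular matrix multiplication and the definition of $\omega$.

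The third and most delicate step is forming $U$ and $V$ within the target budget, since a naive loop over the $t = n^{2/3+o(1)}$ columns already costs $\Omega(n \cdot t) = n^{5/3+o(1)}$ per evaluation per group. Here I would follow the halving trick sketched in the technique overview: enumerate subsets $N_1,\ldots,N_u$ of size at most $\lceil r/2 \rceil$ and, for each $i \in [h]$, form the $u \times g$ matrices $L^i$ and $\tilde L^i$ with entries $x_{N_s}$ and $a^x x_{N_s}$ respectively. Each target entry $U_{i,s}$ is a prescribed entry of the product $P^i = L^i (\tilde L^i)^\top$ (indexed by the pair of halves whose symmetric difference is $M_s$). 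Building all the $L^i$ costs $O(h \cdot u \cdot g \cdot r) = \tilde O(n^{4/3 + o(1)})$, and forming all $h$ products $P^i$ costs $h \cdot \Tmat(u, g, u) = O(n^{(2+\omega)/3 + o(1)})$; both are absorbed in $O(n^{2\omega/3 + o(1)})$. The matrix $V$ is obtained from $U$ by scalar multiplication in $O(h t)$ time. Scanning $C$ for entries above $2\theta$ takes $O(h^2) = O(n^{4/3})$ time, which is also dominated.

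The main obstacle will be pinning down the bounds $t = n^{2/3+o(1)}$ and $u = n^{1/3+o(1)}$: these require showing that $\binom{d}{r}$ (with $d = c_0 \log n$ and $r = \Theta(\log n)$) is $n^{2/3 + o(1)}$ under the parameter regime fixed in Lemma~\ref{lem:correctness_light_bulb}, and verifying that changing $r$ by $1$ and taking $\lceil r/2\rceil$ does not break the balance between the cost of forming $U,V$ and the cost of the outer product $UV^\top$. Once those dimension estimates are in place, combining the three cost contributions and multiplying by the $O(\log n)$ outer repetitions gives the claimed bound $O(n^{2\omega/3 + o(1)})$.
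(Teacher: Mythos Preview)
Your proposal is correct and follows essentially the same approach as the paper: the multilinear expansion of $p$ to write $C = UV^\top$, the dimension bounds $t = n^{2/3+o(1)}$ and $u = n^{1/3+o(1)}$ (which the paper isolates as a separate lemma), and the halving trick via the products $P^i = L^i(\tilde L^i)^\top$ to build $U,V$ in $O(n^{(2+\omega)/3+o(1)})$ time are exactly the paper's ingredients. Your explicit accounting of the $O(h^2)$ scan and the $O(\log n)$ repetitions is a nice addition that the paper leaves implicit.
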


 \ifdefined\isshort

We defer the proofs into the full version of the paper \cite{full}.
 \else
 
\begin{proof}

Before computing $C_{i,j}$, we will rearrange the expression for $C_{i, j}$ into one which is easier to compute. Since we are only interested in the values of $p$ when its inputs are all in $\{-1,1\}$, we can replace $p$ with its multilinearization $\hat{p}$. 

Let $M_{1}, \ldots, M_{t}$ be an enumeration of all subsets of $[d]$ of size at most $r$, thus we know $t$ can be calculated as follows:
\begin{align*}
    t=\sum_{i=0}^{r}{d \choose i}.
\end{align*}
% \Zhao{change ``so'' to ``thus we can rewrite $t$ as follows''}

Then, there are coefficients $c_{1}, \ldots, c_{t} \in \mathbb{Z}$ such that 
\begin{align*}
\hat{p}(x)=\sum_{s=1}^{t} c_{s} x_{M_{s}}.
\end{align*}

Rearranging the order of summation of Eq.~\eqref{eq:compute_C}  %\Zhao{of what ???} \lianke{added}
, we see that we are trying to compute
\begin{align}\label{eq:C_i_j}
 C_{i, j}
 = & ~ \sum_{s=1}^{t} \sum_{x \in A_{i}} \sum_{y \in B_{j}} a^{x} \cdot a^{y} \cdot c_{s} \cdot x_{M_{s}} \cdot y_{M_{s}} \notag \\
 = & ~ \sum_{s=1}^{t} ( c_{s} \cdot(\sum_{x \in A_{i}} a^{x} \cdot x_{M_{s}}) \cdot(\sum_{y \in B_{j}} a^{y} \cdot y_{M_{s}}))
\end{align}
% \Zhao{What does the above $[]$ mean?}\lianke{I think the original proof use $[]$ to distinguish the inner $()$. i replace it with $()$. }
In order to compute $C_{i, j}$, we first need to compute the coefficients $c_{s}$. Notice that $c_{s}$ depends only on $|M_{s}|$ and $r$. We can thus derive a simple combinatorial expression for $c_{s}$, and hence compute all of the $c_{s}$ coefficients in  $\poly(r)= \mathrm{polylog}(n)$ time. Alternatively, by starting with the polynomial $(z_{1}+\cdots+z_{d})$ and then repeatedly squaring then multilinearizing, we can easily compute all the coefficients in $O(t^{2} \mathrm{polylog}(n))$ time; this slower approach is still fast enough for our purposes.

Define the matrices $U, V \in \mathbb{Z}^{h \times t}$ by %\Zhao{Use $:=$ for define}\lianke{fixed }
\begin{align*}
    U_{i, s} :=\sum_{x \in A_{i}} a^{x} \cdot x_{M_{s}} \mathrm{~~and~~} V_{i, s} :=c_{s} \cdot U_{i, s}.
\end{align*}
%\Tianyi{the definition of $U,V$ here is different from the definition in page 2.} \Zhao{Can you make them to be consistency?}\Yuanyuan{Tianyi, can you replace this definition of $U$ and $V$ anywhere in the paper?} \Tianyi{done}
 Notice from Eq.~\eqref{eq:C_i_j} that the matrix product $C:=U V^{\top} \in \mathbb{Z}^{h \times h}$ is exactly the matrix of the values $C_{i, j}$ we desire.

 A simple calculation (see Lemma~\ref{lem:bound_t_and_u} below) shows that for any $\epsilon>0$, we can pick a sufficiently big constant $w>0$ such that $t=O(n^{2 / 3+ o(1)})$.

 Since $h=O(n^{2 / 3})$, if we have the matrices $U, V$, then we can compute this matrix product in 
%  \Zhao{Can we use $\Tmat(a,b,c)$ notation here?} 
\begin{align*}
    \Tmat(n^{2/3}, n^{2 / 3+o(1)}, n^{2 / 3}) = O(n^{2 \omega / 3+o(1)})
\end{align*}
 time, completing the algorithm.

Unfortunately, computing the entries of $U$ and $V$ naively would take $\Omega(h \cdot t \cdot g)=\Omega(n^{5 / 3})$ time, which is slower than we would like. 

%We will instead use a clever trick due to Lovett~\cite{l11}, which was first applied in this context by Karppa et al.~\cite{kkk16}: we will compute those entries using another matrix multiplication. 

Let $N_{1}, \ldots, N_{u}$ be an enumeration of all subsets of $[d]$ of size at most $\lceil r / 2\rceil$. For each $i \in[h]$, define the matrices $L^{i}, \tilde{L}^{i} \in \mathbb{Z}^{u \times g}$ (whose columns are indexed by elements $x \in A_{i}$ ) by %\Zhao{If the following things is defined, you should use $:=$} \Zhao{Replace $\mathrm{\;\;and\;\;}$ by $\mathrm{~~and~~}$  } \lianke{fixed these two comments across the paper.}
\begin{align*}
    L_{s, x}^{i} :=x_{N_{s}}  \mathrm{~~and~~}  \tilde{L}_{s, x}^{i} :=a^{x} \cdot x_{N_{s}}.
\end{align*}
Then  compute the product $P^{i}:=L^{i} \tilde{L}^{i \top} \in \R^{u \times u}$. We can see that
\begin{align*}
   P_{s, s'}^{i}=\sum_{x \in A_{i}} a^{x} \cdot x_{N_{s} \oplus N_{s'}}, 
\end{align*}
where $N_{s} \oplus N_{s'}$ % \Zhao{We should formally define $\oplus$ in preliminary.}\lianke{fixed}
is the symmetric difference of $N_{s}$ and $N_{s'}$. Since any set of size at most $r$ can be written as the symmetric difference of two sets of size at most $\lceil r / 2\rceil$, each desired entry $U_{i, s}$ can be found as an entry of the computed matrix $P^{i}$. Similar to our bound on $t$ from before (see Lemma~\ref{lem:bound_t_and_u} below), we see that for big enough constant $w$, we have $u=O(n^{1 / 3+o(1)})$. Computing the entries of the $L^{i}$ matrices naively takes only 
\begin{align*}
O(h \cdot u \cdot g \cdot r)
= ~ & \tilde{O}(n \cdot u) \\ 
= ~ & \tilde{O}(n^{4 / 3+o(1)})
\end{align*}
time, and then computing the products $P^{i} \in \R^{u \times u}$ 
% \Zhao{Lianke, make sure that you write the size of matrix everywhere.} \lianke{checked again}
takes time %\lianke{below time is weird, it means that computing each $P^i = L^{i} \cdot \wt{L}^{i \top}$ takes $O(\max \{ u, g\})$ time. I think it should be $O(h) \cdot \Tmat(u, g, u) = O(n^{(2+\omega)/3 + o(1)})$}
\begin{align*}
O(h ) \cdot  \Tmat(u, g, u)= O(n^{(2+\omega) / 3+o(1)} ).
\end{align*}
% \Zhao{We use $\max\{\}$, please make sure that you changed it everywhere}
 Both of these are dominated by $O(n^{2 \omega / 3+o(1)})$. 
This completes the proof.
\end{proof}

\fi

We will need the following lemma to prove the time complexity of our algorithm.  %The proof is delayed to Section~\ref{sec:missing_proof}.
\begin{lemma}\label{lem:bound_t_and_u}
Let $d=O(w^{2} \log (n))$ and $n = |A| = |B|$ be the vector dimension and number of vectors in Theorem~\ref{thm:main}, and let 
\begin{align*}
    r := \lceil\log_{w}(\tau n^{1 / 3})\rceil, u := {d \choose r/2}, t := {d \choose r}
\end{align*}
For every $\epsilon>0$,  and there is a $w>0$ such that we can bound
\begin{align*}
  t=O(n^{2 / 3+o(1)}), u=O(n^{1 / 3+o(1)}).   
\end{align*}

% \Yuanyuan{Tianyi, can you add intuitions/explanations of this lemma, and fix its current statement if possible? Please also check the comments I uncommented.}?
% \Zhao{Tianyi, can you do something here?}
Lemma \ref{lem:bound_t_and_u} implies that for any $\epsilon >0$, we can find a sufficiently large $w$ to bound the time needed to compute $UV^\top$ and $P^{i}:=L^{i} \tilde{L}^{i \top}$.
 
\end{lemma}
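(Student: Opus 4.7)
The plan is to bound $t = \binom{d}{r}$ (and likewise $u = \binom{d}{\lceil r/2 \rceil}$, or the analogous partial sums used earlier in the paper) via the standard estimate $\binom{d}{k} \le (ed/k)^k$, then substitute the explicit values of $d$ and $r$ and verify that the exponent on $n$ approaches $2/3$ (resp.\ $1/3$) as $w$ grows. The extra factor of $r+1$ from passing between $\binom{d}{r}$ and $\sum_{i=0}^r\binom{d}{i}$ is only $\mathrm{polylog}(n)$, so it is absorbed into $n^{o(1)}$.

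First, I would unpack $d/r$. Since $d = c_1 w^{2}\log n$ for a constant $c_1$ (coming from $c_0 = \delta w^2/\rho^2$ in Lemma~\ref{lem:correctness_light_bulb}), and
$r = \lceil \log_w(\tau n^{1/3})\rceil$ satisfies $r = \tfrac{\log n}{3\log w} + O(1)$, one obtains
\[
\frac{d}{r} \le \frac{c_1 w^{2}\log n}{\tfrac{\log n}{3\log w}-O(1)} = 3c_1 w^{2}\log w \cdot (1+o_n(1)),
\]
a quantity depending only on $w$. In particular $d/r$ is a bounded constant once $w$ is fixed, so $r \le d/2$ for large $n$ and the binomial estimate is applicable.

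Next I would take logarithms of $\binom{d}{r}$. Applying $\log\binom{d}{r}\le r\log(ed/r)$ and the bound on $d/r$ above,
\[
\log t \;\le\; r \bigl(2\log w + \log\log w + O(1)\bigr) \;\le\; \Bigl(\frac{\log n}{3\log w}+O(1)\Bigr) \bigl(2\log w + \log\log w + O(1)\bigr).
\]
Multiplying out, the dominant term is $\tfrac{2}{3}\log n$, and the remaining contribution is $O\!\bigl(\tfrac{\log n \cdot \log\log w}{\log w}\bigr) + O(\log w)$. Given any $\epsilon>0$, choose $w$ large enough that $\log\log w/\log w \le \epsilon/2$; then $\log t \le (2/3+\epsilon)\log n$ for all sufficiently large $n$, i.e.\ $t \le n^{2/3+\epsilon}$. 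Since $\epsilon$ is arbitrary, this is precisely $t = O(n^{2/3+o(1)})$. Repeating the same computation with $\lceil r/2\rceil$ in place of $r$ halves the leading coefficient and yields $u = O(n^{1/3+o(1)})$.

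There is no real obstacle here: the argument is a direct substitution plus the observation that $\log\log w/\log w \to 0$, which lets the constant $w$ be tuned to the desired $\epsilon$. The only bookkeeping care is (i) handling the ceiling in $r$ and the halving for $u$, which only costs $O(1)$ and does not affect the leading constants $2/3$ and $1/3$, and (ii) remembering that $t$ may refer to the partial sum $\sum_{i=0}^r \binom{d}{i}$ rather than the single binomial, which costs at most a factor $r+1 = O(\log n)$ that is swept into $n^{o(1)}$.
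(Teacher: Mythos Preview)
Your proposal is correct and follows essentially the same route as the paper: bound $\binom{d}{r}\le (ed/r)^r$, substitute $d=O(w^2\log n)$ and $r=\Theta(\log n/\log w)$ to get $ed/r=O(w^2\log w)$, and conclude that the exponent on $n$ is $2/3+O(\log\log w/\log w)$, which is driven below any $\epsilon>0$ by choosing $w$ large. Your treatment of the ceiling, the $(r+1)$ factor for the partial sum, and the halving for $u$ matches the paper's handling as well.
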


% We delay the proof to Section~\ref{sec:missing_proof}.
 \ifdefined\isshort

We defer the proofs into the full version of the paper \cite{full}.
 \else
 
\begin{proof}
% \Zhao{Should we move the choice of $d$ into the statement of lemma 4.5.}\lianke{move to statement of lemma 4.5}
Recall that $r=\log_{w}(O(n^{1 / 3}))$. %Hence we have:
We can upper bound $t$ as follows:
\begin{align*}
 t \leq & ~ (r+1) \cdot {d \choose r} \\
 \leq & ~ (r+1) \cdot(e d / r)^{r} \\
 \leq & ~ O(w^{2} \log (w))^{\log _{k}(O(n^{1 / 3}))} \\
 = & ~ n^{2 / 3+O(\log \log (w) / \log (w))}
\end{align*}
where the first step follows from $t=\sum_{i=0}^{r}{d \choose i} \leq  (r+1) \cdot {d \choose r}$,
the second step follows that ${d \choose r} \leq (ed/r)^{r}$,
the third step comes from the value of $d$ and $r$,
and the final step comes from $a^{\log_{a} n} = n$.
% \Zhao{You need to explain the reason for every above steps.}
% \lianke{check other places to fill reasons.}
For any $\epsilon>0$ we can thus pick a sufficiently large $w$ so that $t \leq O(n^{2 / 3+o(1)})$. 

We can similarly upper bound ${d \choose r/2}$ $\leq O(n^{1 / 3+o(1)})$ which implies our desired bound on $u$.

This completes the proof.
\end{proof}
\fi

\subsubsection{Running time of solving heavy group pairs}  

Then we analyze the time complexity of solving heavy group pairs in Lemma~\ref{lem:time_step_2}.
\begin{lemma}[Time complexity of solving heavy group pairs  ]\label{lem:time_step_2} %\Zhao{step 2 has a name, maybe say it's name here.}\lianke{fixed the name}
 For every $k, \rho>0$, solving the heavy group pairs in \ifdefined\isshort Algorithm 1 in full version of the paper \cite{full} \else Algorithm~\ref{alg:light_bulb} \fi for heavy inner product between two sets problem in Definition~\ref{def:main_problem} 
 of correlation $\rho$ runs in 
 %\begin{align*}
 $
    \tilde{O}(k \cdot n^{2 / 3}) 
$
 %\end{align*}
 time.
\end{lemma}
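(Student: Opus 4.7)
The plan is to decompose the running time of solving heavy group pairs into three factors: the number of group pairs to process, the cost of brute-force search within each pair, and the cost of evaluating a single inner product. I would first note that the list $R$ returned by \textsc{FindHeavyGroupPairs} has size $|R| = O(k \log n)$: the outer loop repeats $10 \log n$ times, and by the correctness analysis (Lemma~\ref{lem:correctness_light_bulb}) each round contributes at most the $k$ true heavy group pairs with high probability, after accounting for collisions via Lemma~\ref{lem:group_collision}.

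Next, I would analyze one call to \textsc{SolveOneGroupPair}. Each such call performs two nested loops over $A_i$ and $B_j$, each of size $g = n^{1/3}$, giving $g^2 = n^{2/3}$ candidate vector pairs. For each candidate, the algorithm computes $\langle A_{i,q}, B_{j,\ell}\rangle$ and compares it to $\rho d$. Since each vector lives in $\{-1,+1\}^d$ with $d = c_0 \log n$, a single inner product costs $O(d) = O(\log n)$ time. Thus one call runs in $O(n^{2/3} \cdot \log n) = \tilde{O}(n^{2/3})$ time.

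Finally, I would multiply: the total running time over all pairs in $R$ is
\begin{align*}
|R| \cdot \tilde{O}(n^{2/3}) = O(k \log n) \cdot \tilde{O}(n^{2/3}) = \tilde{O}(k \cdot n^{2/3}),
\end{align*}
which matches the claimed bound. No step here is genuinely hard; the only mild subtlety is justifying $|R| = O(k \log n)$, which follows directly from the loop structure of \textsc{FindHeavyGroupPairs} together with the already-established fact that at most $k$ group pairs per round exceed the threshold $2\theta$ with high probability. Everything else is a routine product of loop sizes and the per-operation cost in $d = O(\log n)$.
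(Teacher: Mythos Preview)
Your proposal is correct and follows essentially the same approach as the paper's proof: the paper also asserts that $|R| = 10k\log n$ group pairs are returned by \textsc{FindHeavyGroupPairs} and then brute-forces within each pair of groups of size $g = n^{1/3}$ to obtain the $\tilde{O}(k\cdot n^{2/3})$ bound. Your version is slightly more explicit in accounting for the $O(d)=O(\log n)$ cost per inner product, which the paper absorbs silently into the $\tilde{O}$.
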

 \ifdefined\isshort

We defer the proofs into the full version of the paper \cite{full}.
 \else
 
\begin{proof}

Recall that for finding the heavy group pairs of \textsc{FindCorrelated} algorithm, we use two pairwise independent hash function $h_A$ and $h_B$ to partition $A$ into $h:=n^{2 / 3}$ groups $A_{1}, \ldots, A_{h}$ of size $g:=n / h=n^{1 / 3}$ per group, and partition $B$ into $h$ groups $B_{1}, \ldots, B_{h}$ of size $g$ per group too. 

Because we have found $10 k \log n$ group pairs $\{(A_{i,1}, B_{j,1}), \cdots, (A_{1,k}, B_{j, k})\} $ %\Tianyi{what is $A_{1,k}$ here?} \Zhao{It might be $k$-th vector in $i$-th group? or something like that. You should be able to figure it out by doing some guess.} \Tianyi{In algorithm \ref{alg:light_bulb}, we use $A_{i,j}$ to denote group $i$ and $j$-th vector in that group}\Tianyi{I guess we find $10 k \log n$ group pairs and only $k$ of them have correlated vector pairs?} \Zhao{I think so.}
which contain the $k$ heavy inner product vector pairs at the end of \textsc{FindHeavyGroupPairs} with high probability,
% \Zhao{What is $A_i$ and $B_j$ mean ? You should re define them here} \lianke{I have added reasons} \Zhao{Lianke, can you redefine all the $A_1, \cdots ....$ and $B_1, \cdots, ...$} \lianke{add the definition the first paragraph.}
a brute force within these  group pairs of $O(n^{1 / 3})$ vectors can find the correlated pair in $\tilde{O}(k \cdot n^{2 / 3})$ time. This completes the proof.
\end{proof}
\fi

\subsubsection{Overall running time}
With the above two lemmas in hand, we can obtain the overall time complexity of \textsc{FindCorrelated} algorithm in Lemma~\ref{lem:time_light_bulb}.
\begin{lemma}[Time complexity]\label{lem:time_light_bulb}
For problem in Definition~\ref{def:main_problem}, and for every %\Zhao{What is $\epsilon?$} \lianke{it is replaced by $o(1)$}
$\rho>0$, there is algorithm (\ifdefined\isshort Algorithm 2 in full version of the paper \cite{full} \else Algorithm~\ref{alg:light_bulb2} \fi) such that for correlation $\rho$ we can find the $k$ pairs $\{(a_1, b_1),\cdots, (a_k, b_k) \}$  whenever $d=c_0 \log n$  in 
\begin{align*} 
O(n^{2 \omega / 3+o(1)})
\end{align*}
time. %\Yuanyuan{Tianyi, make sure the algorithm has the correct reference. We don't have reference of Algorithm~\ref{alg:light_bulb2} anywhere in the paper.} \Tianyi{fixed}
\end{lemma}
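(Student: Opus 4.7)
The plan is to derive Lemma~\ref{lem:time_light_bulb} by cleanly decomposing the runtime of \textsc{FindCorrelated} into its two phases and invoking the runtime lemmas already proved. Concretely, I would trace through the pseudocode of Algorithm~\ref{alg:light_bulb2} and note that there are exactly two nontrivial computational stages after the (cheap) initialization of parameters $h$, $g$, $v$, $r$, $\theta$, $t$ and the (linear-time) partitioning of $A$ and $B$ via the two pairwise independent hash functions.

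First I would bound the cost of the call \textsc{FindHeavyGroupPairs}. By Lemma~\ref{lem:time_step_1}, this stage produces the candidate set $R$ in time $O(n^{2\omega/3 + o(1)})$. Second I would bound the loop over $\ell = 1, \ldots, |R|$ that invokes \textsc{SolveOneGroupPair} on each surviving group pair. By construction $|R| = O(k \log n)$, and by Lemma~\ref{lem:time_step_2} the total cost of these brute-force searches is $\tilde{O}(k \cdot n^{2/3})$. Summing the two phases yields a total runtime of
\begin{align*}
O(n^{2\omega/3 + o(1)}) + \tilde{O}(k \cdot n^{2/3}).
\end{align*}

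To collapse this to the claimed bound, I would observe that the success-probability statement of Theorem~\ref{thm:main} requires $k^2/n^{2/3}$ to be a meaningful failure probability, so implicitly $k = o(n^{1/3})$. Combined with $\omega \geq 2$, which gives $2\omega/3 \geq 4/3$, we have $k \cdot n^{2/3} = o(n) \ll n^{2\omega/3}$, so the second term is absorbed into the first. Therefore the overall running time is $O(n^{2\omega/3 + o(1)})$, as claimed.

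The main obstacle here is really a bookkeeping one rather than a technical one: making sure that the $10 \log n$ repetitions inside \textsc{FindHeavyGroupPairs} are already accounted for in Lemma~\ref{lem:time_step_1}'s bound (they are, since the $O(\log n)$ factor is absorbed into the $n^{o(1)}$), and that the $|R| = O(k \log n)$ factor in the brute-force phase does not blow up past $n^{2\omega/3}$ under the implicit constraint on $k$. Once these are checked, the proof is just an application of the two preceding lemmas and the additivity of runtimes.
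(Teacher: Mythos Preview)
Your proposal is correct and follows essentially the same approach as the paper: decompose the runtime into the two phases, invoke Lemma~\ref{lem:time_step_1} and Lemma~\ref{lem:time_step_2}, and observe that the $\tilde{O}(k \cdot n^{2/3})$ term is dominated by $O(n^{2\omega/3+o(1)})$. The paper simply asserts the domination inequality without argument, whereas you supply the extra justification via the implicit constraint $k = o(n^{1/3})$ coming from the success probability; this is a welcome clarification but not a different route.
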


% \vspace{-3mm}
 \ifdefined\isshort

We defer the proofs into the full version of the paper \cite{full}.
 \else
 
\begin{proof}
The proof follows by Lemma~\ref{lem:time_step_1} and Lemma~\ref{lem:time_step_2}. We can obtain the overall time complexity by:
% \Zhao{Can you say more} \lianke{Done, just added}
\begin{align*}
    \mathrm{total~time}
    = & ~ O(n^{2 \omega / 3+o(1)}) + \wt{O}(k \cdot n^{2 / 3}) \\
    = & ~ O(n^{2 \omega / 3+o(1)})
\end{align*}
where the first step comes from the time complexity of \textsc{FindHeavyGroupPairs} and solving all heavy group pairs in Lemma~\ref{lem:time_step_1} and Lemma~\ref{lem:time_step_2},
and the second step follows that 
\begin{align*}
   O(n^{2 \omega / 3+o(1)}) > \wt{O}(k \cdot n^{2 / 3}). 
\end{align*}
% \Zhao{can you filling the details in above, after you resolve my comment. You can't just comment out it directly. You need to add reply to it after my comments. Then comment out both. lianke!!!! This is important.}\lianke{added reasons.}

This completes the proof.
% \vspace{-3mm}
\end{proof}
\fi

\section{Deterministic Algorithm}\label{sec:deterministic}

We now present a deterministic algorithm for the heavy inner product between two sets problem. Each is a slight variation on the algorithm from the previous section. 
% We defer the presentation of Algorithm~\ref{alg:light_bulb_deterministic} to Section~\ref{sec:deter_alg} due to space limitation.

\ifdefined\isshort

We defer Our Algorithms into the full version of the paper \cite{full}.
 \else
\begin{algorithm*}[!ht]\caption{Deterministic algorithm for heavy inner product between two sets problem.
}
\label{alg:light_bulb_deterministic}
\begin{algorithmic}[1]
% \State {\bf data structure}
% \State {\bf members}
% \State \hspace{4mm} $\theta \in \R$ \Comment{The threshold for group pair score $C_{i,j}$}
% \State {\bf end members}
\Procedure{FindRandomBits}{$ A \subset \{-1, 1\}^d, B \subset \{-1, 1\}^d, \rho \in (0,1), n\in \mathbb{N}_{+}$}
\State Initialize an empty set $\wt{A}$
\For{$i = 1 \rightarrow n$}
    \State $r \gets 0$
    \For{$j = 1 \rightarrow n$}
        \If{$\langle a_{i}, b_j \rangle \geq \rho d$} 
            \State $r \gets 1$ \Comment{If the inner product of vector $a_i$ and $b_j$ surpass the threshold, we set $r := 1$.}
        \EndIf
    \EndFor
    \If{$r = 0$}
    \State $\wt{A}.\textsc{append}(a_i)$ \Comment{If $a_i$ is a random vector, we include it in the set $\wt{A}$.}
    \EndIf
    \If{$|\wt{A}| \geq \log n$}
        \State Break \Comment{We have collected enough random vector for next step.}
    \EndIf
\EndFor
\State \Return $\wt{A}$
\EndProcedure
\Procedure{FindCorrelated}{$A  \subset \{-1, 1\}^{d}, B \subset \{-1, 1\}^{d}, n \in \mathbb{N}_{+}, k \in \mathbb{N}_{+}, \tau \in \R_{+}, w \in \R_{+}, \rho \in (0,1), \delta \in \R $ 
}

\State Choose two pairwise independent hash functions $h_A , h_B : \{-1, 1\}^{d} \rightarrow [n^{2/3}]$
\State $h \gets n^{2/3}, g \gets n^{1/3}$, $v \gets \delta(w / \kappa) \log n$, $r \gets \lceil\log_{w}(\tau n^{1 / 3})\rceil$, $\theta \gets { \tau n^{1/3} v^{r}}/ {3}$
\State $t \gets \sum_{i=0}^{r}{d \choose i}$
\State Partition $A$ into $h$ groups $A_1, \cdots, A_h$ and $B$ into $h$ groups $B_1, \cdots, B_h$. Each contains $g$ vectors.
\State $\wt{A} \gets  \textsc{FindRandomBits}(A, B, \rho, n)$ \Comment{Time complexity is $O(n \log ^{2}(n))$}
% \Yuanyuan{Tianyi, add running time of this step. }\Tianyi{done}
\State $R \gets \textsc{FindHeavyGroupPairs}( \{A_1, \cdots, A_h\},$ $\{B_1, \cdots, B_h\}, n, t)$ 
% \Yuanyuan{Tianyi, please 1)\ add the input of this algorithm. 2)\ add reference to algorithm 1. }\Tianyi{done}
\State \Comment{Time complexity is  $O(n^{2 \omega / 3+o(1)})$. The $a^x$ and $a^y$ generation uses the random bits from $\wt{A}$. Algorithm~\ref{alg:light_bulb}}
% \State  % How large can $|R|$ be? \Zhao{Lianke, can you fix this?} }
% \State {\color{blue}/*Step 2. Time complexity is $O(k g^2) = O(k n^{2/3})$.*/}
% \State Only save the top-$k$ frequent pairs $(i,j)$ in $R$ and drop others.
\State $F \gets \emptyset$ \Comment{We use set $F$ to store all correlated vector pairs}
\For{$\ell = 1 \rightarrow |R|$}
    \State $(i,j) \gets R_{\ell}$ \Comment{Obtain the group pair index which contains heavy inner product vector pair}
    \State $p \gets \textsc{SolveOneGroupPair}(A_i, B_j, g)$  \Comment{Algorithm~\ref{alg:light_bulb} 
    } 
    \State $F.\textsc{Append}(p)$ 
\EndFor
\State \Return $F$
% \For{$\ell = 1 \rightarrow \log(n)$}
%     \For{$i = 1 \rightarrow h$} 
%         \For{$j = 1 \rightarrow h$}
%                 \State  $C_{i, j}= \sum_{s=1}^{t}[c_{s} \cdot(\sum_{x \in A_{i}} a^{x} \cdot x_{M_{s}}) \cdot(\sum_{y \in B_{j}} a^{y} \cdot y_{M_{s}})]$ %\lianke{computing $C_{i,j}$ needs to be optimized}
%                 \If{$C_{i, j} \geq 2 \theta$}
%                     \State $R.\textsc{append}((i, j))$ 
%                 \EndIf
%             % \EndIf
%         \EndFor
%     \EndFor
% \EndFor \Comment{$|R|=10 k \log(n)$}
% \State  % How large can $|R|$ be? \Zhao{Lianke, can you fix this?} }
% \State {\color{blue}/*Step 2. Time complexity is $O(k g^2) = O(k n^{2/3})$.*/}
% \State Only save the top-$k$ frequent pairs $(i,j)$ in $R$ and drop others.
% \State $O \gets \emptyset$
% \For{$\ell = 1 \rightarrow |R|$}
%     \State $(i,j) \gets R_{\ell}$ 
%     \For{$q = 1 \rightarrow g$} \Comment{Brute force search within each group pair}
%         \For{$l = 1 \rightarrow g$}
%             \If{$\langle A_{i, q}, B_{j, l} \rangle \geq \rho d$ }
%                 \State $O.\textsc{append}(i \cdot g + q, j \cdot g + l)$ \Comment{Find the heavy inner product vector pair}
%             \EndIf
%         \EndFor
%     \EndFor
% \EndFor

\EndProcedure

% \State {\bf end data structure}
\end{algorithmic}
\end{algorithm*}
\fi

\begin{lemma}\label{lem:deterministic_algo}
For every $\rho>0$, there is a $c_0>0$ and a deterministic algorithm (\textsc{FindCorrelated} in \ifdefined\isshort Algorithm 3 in full version of the paper \cite{full} \else Algorithm~\ref{alg:light_bulb_deterministic} \fi) such that the heavy inner product between two sets problem in Definition~\ref{def:main_problem} of correlation $\rho$ can be solved in deterministic time 
%\begin{align*} 
$
O(n^{2 \omega / 3+o(1)})
$
%\end{align*}
on almost all instances whenever $d=c_0\log n$.
\end{lemma}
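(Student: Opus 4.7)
My plan is to derandomize the algorithm from Theorem~\ref{thm:main} by harvesting uniform random bits directly from the input, then seeding a pairwise independent family of signs with those bits. The randomized analysis in Lemma~\ref{lem:correctness_light_bulb} only used the signs $a^x \in \{-1,+1\}$ through a second-moment (Chebyshev) argument, so pairwise independence of the $a^x$ suffices. A standard construction builds a pairwise independent family of $n$ many $\{-1,+1\}$ variables from only $O(\log n)$ truly uniform bits. Hence it is enough to extract $O(\log n)$ uniform random bits from $A$; the procedure \textsc{FindRandomBits} does this by collecting $\log n$ vectors of $A$ that have no $\rho d$-heavy partner in $B$, yielding $d \cdot \log n = O(\log^2 n)$ bits, far more than the seed length required.

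First, I would argue that on almost all instances \textsc{FindRandomBits} succeeds in time $O(n \log^2 n)$ and returns genuinely uniform bits. Since at most $k$ vectors of $A$ belong to a correlated pair, at most $k$ of the first $\log n + k$ vectors of $A$ can fail the test for the "correlated'' reason. For the remaining uniformly random $a_i$, a Hoeffding bound identical to the one in Lemma~\ref{lem:correctness_light_bulb} together with a union bound over $B$ shows
\begin{align*}
\Pr\bigl[\exists\, j:\ |\langle a_i, b_j\rangle|\geq \rho d\bigr] \leq n \cdot n^{-13} = n^{-12}.
\end{align*}
Therefore, on all but a $o(1)$ fraction of instances, each of the first $\log n + k$ uncorrelated $a_i$ passes the filter, so \textsc{FindRandomBits} terminates after examining $O(\log n)$ rows, each costing $O(n d) = O(n\log n)$ time, giving the claimed $O(n \log^2 n)$ bound. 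Conditioning uniform vectors on the probability-$(1-n^{-12})$ event of passing the filter perturbs their distribution in total variation by at most $n^{-12}$; the bits may then be treated as exactly uniform at the cost of an additive $n^{-11}$ failure probability in the "almost all instances'' guarantee.

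With $O(\log^2 n)$ uniform bits in hand, I would instantiate a standard pairwise independent $\pm 1$ family (say via a degree-one polynomial over $\mathbb{F}_{2^{\lceil \log n\rceil}}$) to produce the signs $a^x$ for $x \in A \cup B$ used by \textsc{FindHeavyGroupPairs}, and then invoke the rest of Algorithm~\ref{alg:light_bulb2} / Algorithm~\ref{alg:light_bulb_deterministic} verbatim. Since the analysis of Lemma~\ref{lem:correctness_light_bulb} only invoked the second-moment bound on $C_{i,j}$ via Chebyshev, which requires only pairwise independence, correctness carries through with the same probability estimate as in the randomized case, up to the additive $n^{-11}$ loss incurred above. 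The only new cost is the $O(n\log^2 n)$ spent in \textsc{FindRandomBits}, which is absorbed into the dominant $\Tmat(n^{2/3}, n^{2/3 + o(1)}, n^{2/3}) = O(n^{2\omega/3 + o(1)})$ term from Lemma~\ref{lem:time_step_1}. The main obstacle I expect is precisely the conditioning step: showing that the bits returned by \textsc{FindRandomBits} are uniform enough to seed a pairwise independent hash family without disturbing the variance calculation, which is exactly where the "almost all instances'' qualifier in the statement is needed.
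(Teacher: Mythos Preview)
Your proposal is correct and follows essentially the same approach as the paper: harvest $\Theta(\log n)$ input vectors that brute-force testing certifies as uncorrelated, treat their $\Theta(\log^2 n)$ entries as uniform bits, and use them to seed the pairwise independent signs (and hash functions) needed by the randomized algorithm, whose Chebyshev analysis only requires pairwise independence. You are in fact a bit more careful than the paper about the conditioning issue and about why \textsc{FindRandomBits} terminates after examining only $O(\log n)$ rows on almost all instances; the paper simply asserts the $O(n\log^2 n)$ cost and that the harvested bits may be treated as uniform.
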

% We delay the proof to Section~\ref{sec:missing_proof}.
 \ifdefined\isshort

We defer the proofs into the full version of the paper \cite{full}.
 \else
 
\begin{proof}
% The only randomness used by our algorithm for Theorem~\ref{thm:main} was our choice of an independently and uniformly random $a^{x} \in\{-1,1\}$ for each $x \in S$.

In the randomized algorithm described in Algorithm~\ref{alg:light_bulb}, the only randomness used is 
the choice of independently and uniformly random $a^{x} \in\{-1,1\}$ for each $x \in A$ and the two pairwise independent hash functions, which requires $\Theta(n)$ random bits.

Because we repeat the entire algorithm $\Theta(\log n)$ times to get the correctness guarantee with high success probability, we need $\Theta(n \log n)$ random bits in total.

% However, the only property of the $a^{x}$ variables which we use in the proof of correctness is that they are pairwise-independent. 

By standard constructions, we can use $O(\log n)$ independent random bits  to generate $n$ pairwise-independent random bits for the pairwise-independent $a^{x}$ variables. Therefore, we only needs $O(\log^2 n)$ independent random bits in the \textsc{FindCorrelated} algorithm. We can also use the bits to construct two pairwise independent hash functions used for partitioning the input sets into groups.

Our deterministic algorithm executes as follows:
\begin{itemize}
    \item Choose the same $c_0$ value as in Theorem~\ref{thm:main}.
    \item Let $A, B \subset\{-1,1\}^{d}$ be the input vectors. 
    Initialize an empty set $\wt{A} \subset\{-1,1\}^{d}$.
    \item For $i = 1 \rightarrow n$, brute-force test whether $a_i$
    is correlated with any vector in $B$, where $a_i$ is a vector in $A$. If not, we save $a_i$ into $\wt{A}$ until $|\wt{A}| = \Theta(\log n)$. We can assume that the vectors in $\wt{A}$ are all uniformly random vectors from $\{-1,1\}^{d}$, because they do not produce heavy inner product with any vector in $B$. This can be done in
    $O(|\wt{A}| \cdot|B| \cdot d)=O(n \log ^{2}(n))$
    time.
    \item we can use $\wt{A}$ as $d \cdot|\wt{A}|=\Theta(\log^{2} n)$ independent uniformly random bits. We thus use them as the required randomness to run the \textsc{FindCorrelated} on input vectors in $A$ and $B$. That algorithm has polynomially low error according to Theorem~\ref{thm:main}.
\end{itemize}

Because the time spent on checking if a subset $\wt{A}$ contains the heavy inner product pair is only $O(n \log^2(n))$, and the second part of Algorithm~\ref{alg:light_bulb_deterministic} takes $O(n^{2\omega/3 + o(1)})$ time according to Theorem~\ref{thm:main}, we have the overall time complexity:
\begin{align*}
   ~ O(n \log^2(n)) + O(n^{2\omega/3 + o(1)}) 
    =  ~ O(n^{2\omega/3 + o(1)})
\end{align*}
 
This completes the proof.

\end{proof}

\fi

\section{Speedup Neural Networks Training}
\label{sec:app_nn}

% Deep neural networks have achieved revolutionary success in many fields including computer vision \cite{ rhgs15, hzrs16}, natural language processing \cite{ cvmg+14, vsp+17}, robotics \cite{sbs+18}, recommendation system \cite{wwy15, ckh+16} and information retrieval \cite{ mc18, zxll20}. These fields have been increasingly improving  our experience in daily applications like online shopping, personal assistant, self-driving vehicles and so on.

%How fast we can train a neural network is a fundamental problem in machine learning. 

The training of a neural network usually consists of forward and backward propagation, which includes the computations as a function of the input, the network weights and the activation functions. In practice, not all neurons are activated so we can use shifted ReLU function \cite{syz21}:
%\begin{align*}
$
    \sigma (x) = \max\{ \langle w_r, x\rangle, b_r  \}
    $
%\end{align*}  
to speedup the training. Then it's possible that there is an algorithm whose running time outperforms the naive ``calculate and update" method, by detecting the activated neurons and update accordingly. This setting coincides with heavy inner product identification: We can view the forward propagation of the neural network as 1)\ identify the heavy inner product, where the goal is to identify the heavy inner product (between input and weight) which surpasses the activation function threshold and activates corresponding neurons. 2)\ forward the neurons with heavy inner product to the next layer, and update the desired function accordingly.

%The neurons are usually sparsely activated, so efficiently identifying the heavy inner product pair is needed for faster neural network training.
\begin{definition}[Two Layer Fully Connected Neural Network]\label{def:two_layer_NN}
We will consider two-layer fully connected neural networks with $m$ hidden neurons using shifted ReLU $\sigma : \R \rightarrow \R$ as the activation function.
We first define the two-layer shifted ReLU activation neural network% (for simplicity we define two-layer)
%\begin{align*}
$
    f(x) := \sum_{r=1}^{m} a_r \cdot \sigma_{\tau}( \langle w_r, x \rangle )
$
%\end{align*}
where $\sigma_{\tau}(z) = \max\{z,\tau \}$ is the shifted ReLU activation function, $\{w_r\}_{r=1}^{m} \subset \R^{d}$ is the weight vector, $\{a_r\}_{r=1}^{m} \subset \R$ is the output weight vector, and $x \in \R^{d}$ is the input vector.
\end{definition}

  Consider activation threshold $\tau = \rho d$. In each iteration of gradient descent, it is necessary for us to perform prediction computations for each point in the neural network. For each input vector $x_i \in \R^{d}$, it needs to compute $m$ inner product in $d$ dimensions. Therefore,  $\Theta(mnd)$ time is a natural barrier to identify which weight and input index pairs surpass the shifted ReLU activation function threshold $\tau$ per training iteration.

When the neurons are sparsely activated, such a naive linear scan of the neurons is what we want to avoid. We want to locate the heavy inner product pairs that activate the neurons between the weight vectors $\{w_r\}_{r=1}^{m} \subseteq \R^{d}$ and input vectors $\{x_i\}_{i=1}^{n} \subseteq \R^d$. The methods we proposed in this paper can solve this problem efficiently. For the ReLU, it has been observed that the number of neurons activated by ReLU is small, so our method may potentially speed up optimization for those settings as well.

% \vspace{-3mm}
\section{Conclusion}\label{sec:conclusion}
In this paper, we design efficient algorithms to identify heavy inner product between two different sets. Given two sets $A \subset \{-1,+1\}^d$ and $B \subset \{-1,+1\}^d$ with $|A|=|B|=n$, we promise that we can find $k$ pairs of 
$(a, b) \in A \times B$ such that all of them satisfy $\langle a,b\rangle \geq \rho \cdot d$, for some constant $\rho$. 
Both the deterministic and the probabilistic algorithm run in $O(n^{2\omega /3 + o(1)}) = O(n^{1.582 + o(1)})$  
time to find all the heavy inner product pairs with high probability. In general, our algorithm will consume energy when running, but the theoretical guarantee helps speedup solving the heavy inner product identification problem. However, there are still some limitations of our works. For example, whether how we calculate the correlation between groups is the most efficient way to the heavy inner product problem.
As far as we know, our work does not have any negative societal impact.

\ifdefined\isarxiv
%\section*{Acknowledgments}
\bibliographystyle{alpha}
\bibliography{ref}
\else
\bibliographystyle{IEEEtran} %%% Zhao: For this paper, let's use this one
\bibliography{ref}
\fi

% \newpage
% \onecolumn
% \appendix
% \input{appendix}

%%%% Cut-line between first 10 pages and appendix

%%% some writing rules

%% Writing rule for creating tags.
%% Tags :
%% Theorem    \ref{thm:bla_bla}
%% Lemma      \ref{lem:bla_bla}
%% Claim      \ref{cla:bla_bla}
%% Corollary  \ref{cor:bla_bla}
%% Fact       \ref{fac:bla_bla}
%% Definition \ref{def:bla_bla}
%% Section    \ref{sec:bla_bla}
%% Subsection \ref{sub:bla_bla}
%% Equation   \ref{eq:bla_bla}

\end{document}